\documentclass{article} 

\usepackage{natbib}
\usepackage[top=2.9cm,bottom=3.4cm,left=4cm,right=4cm]{geometry}

%%%%% NEW MATH DEFINITIONS %%%%%

\usepackage{amsmath,amsfonts,bm}

% Mark sections of captions for referring to divisions of figures

% Highlight a newly defined term

% Figure reference, lower-case.

% Figure reference, capital. For start of sentence

% Section reference, lower-case.

% Section reference, capital.

% Reference to two sections.

% Reference to three sections.

% Reference to an equation, lower-case.
\def\eqref#1{equation~\ref{#1}}
% Reference to an equation, upper case

% A raw reference to an equation---avoid using if possible

% Reference to a chapter, lower-case.

% Reference to an equation, upper case.

% Reference to a range of chapters

% Reference to an algorithm, lower-case.

% Reference to an algorithm, upper case.

% Reference to a part, lower case

% Reference to a part, upper case

\def\1{\bm{1}}

% Random variables

% rm is already a command, just don't name any random variables m

% Random vectors

% Elements of random vectors

% Random matrices

% Elements of random matrices

% Vectors

% Elements of vectors

% Matrix

% Tensor
\DeclareMathAlphabet{\mathsfit}{\encodingdefault}{\sfdefault}{m}{sl}
\SetMathAlphabet{\mathsfit}{bold}{\encodingdefault}{\sfdefault}{bx}{n}

% Graph

% Sets

% Don't use a set called E, because this would be the same as our symbol
% for expectation.

% Entries of a matrix

% entries of a tensor
% Same font as tensor, without \bm wrapper

% The true underlying data generating distribution

% The empirical distribution defined by the training set

% The model distribution

% Stochastic autoencoder distributions

 % Laplace distribution

\newcommand{\E}{\mathbb{E}}

\newcommand{\R}{\mathbb{R}}

% Wolfram Mathworld says $L^2$ is for function spaces and $\ell^2$ is for vectors
% But then they seem to use $L^2$ for vectors throughout the site, and so does
% wikipedia.

 % See usage in notation.tex. Chosen to match Daphne's book.

\usepackage{url}
\usepackage{amsmath}
\usepackage{amsfonts}
\usepackage{amssymb}
\usepackage{amsthm}
\usepackage{hyperref}
\usepackage{algorithmic}
\usepackage{algorithm}
\usepackage{graphicx}
\usepackage{subcaption}
\usepackage{enumitem}
\usepackage{float}
\usepackage{xcolor} % Required for the color

\theoremstyle{plain}
\newtheorem{theorem}{Theorem}[section]

\newtheorem{lemma}[theorem]{Lemma}

\theoremstyle{definition}
\newtheorem{definition}[theorem]{Definition}

\theoremstyle{remark}

\newcommand{\N}{\mathbb{N}}
\newcommand{\cN}{\mathcal{N}}
\renewcommand{\d}{\mathrm{d}}
\newcommand{\grad}[1]{\nabla_{\! #1}}
\newcommand{\ddgamma}{\tfrac{\d}{\d\gamma}}
\newcommand{\ddsigma}{\tfrac{\d}{\d\sigma}}
\newcommand{\ddxi}{\tfrac{\partial}{\partial x_i}}

\newcommand{\sumi}{\sum_{i\in[n]}}
\newcommand{\sumj}{\sum_{j\in[n]}}

\title{MAD: Manifold Attracted Diffusion}

\author{ Dennis Elbrächter\thanks{Department of Mathematics, University of Vienna,
Austria (dennis.elbraechter@univie.ac.at).} \and Giovanni S.~Alberti\footnotemark[2]\thanks{MaLGa Center, Department of Mathematics, University of Genoa, Italy (giovanni.alberti@unige.it, matteo.santacesaria@unige.it).} \and Matteo Santacesaria\footnotemark[2]}

%\printAffiliationsAndNotice{\icmlEqualContribution}

\begin{document}
\maketitle

\begin{abstract}
Score-based diffusion models are a highly effective method for generating samples from a distribution of images. We consider scenarios where the training data comes from a noisy version of the target distribution, and present an efficiently implementable modification of the inference procedure to generate noiseless samples.
Our approach is motivated by the manifold hypothesis, according to which meaningful data is concentrated around some low-dimensional manifold of a high-dimensional ambient space. The central idea is that noise manifests as low magnitude variation in off-manifold directions in contrast to the relevant variation of the desired distribution which is mostly confined to on-manifold directions. We introduce the notion of an extended score and show that, in a simplified setting, it can be used to reduce small variations to zero, while leaving large variations mostly unchanged. We describe how its approximation can be computed efficiently from an approximation to the standard score and demonstrate its efficacy on toy problems, synthetic data, and real data. 
\end{abstract}

\section{Introduction}

%Generative models have become a cornerstone of modern machine learning, enabling the synthesis of high-fidelity data across domains such as images, audio, and text. 
Score-based diffusion models are a state-of-the-art approach for image synthesis, often outperforming alternatives like generative adversarial networks (GANs) and variational autoencoders (VAEs) in sample quality and diversity \citep{ho2020denoising,song2020score}. These models learn a neural network by adding noise to data samples during training, according to some forward process. The network can then be used to reverse this process during inference. Essentially, they denoise an image with independent Gaussian pixel values into samples from the data distribution.

However, real-world datasets are often corrupted by noise arising from measurement errors, compression artifacts, or data collection processes \citep{GUPTA2019466,Brummer_2019_CVPR_Workshops}. In this case, standard diffusion models trained directly on noisy data will learn to reproduce this corruption in their generated samples. We aim to address this by developing a method that can generate samples that  approximately come from the clean distribution, despite being trained on noisy data.

%A key assumption underpinning the success of generative models, particularly in high-dimensional spaces like pixel data, is the \textit{manifold hypothesis}: 
A well-established paradigm for understanding data with a high-dimensional representation is the \textit{manifold hypothesis}: meaningful data distributions are concentrated near a low-dimensional manifold embedded in the ambient space \citep{bengio2013representation,fefferman2016testing}. For instance, natural images, despite being represented as a $d$-dimensional array, where $d$ can be in the millions, exhibit intrinsic dimensions that are far lower \citep{pope2021intrinsic}. 
%This hypothesis motivates techniques that exploit geometric structure to improve efficiency and robustness.
This viewpoint will be the basis of our approach as we interpret noise as low-magnitude variations in directions orthogonal to the manifold, whereas meaningful variations of the underlying data correspond to movements along the manifold itself. The central idea is to exploit this geometric structure during the inference process to suppress the former while preserving the latter.

The manifold hypothesis has previously been considered in the context of diffusion models to study their convergence behavior, e.g.\@ in \cite{debortoli2023convergence}, \cite{pmlr-v202-chen23o}, \cite{tang2024adaptivity}, and \cite{potaptchik2024linear}. It has also been argued in \cite{stanczuk2024diffusion} that diffusion models can be leveraged to estimate the intrinsic dimension of data manifolds.
Some existing works tackle the problem of noisy training \citep{daras2024ambient,lu2025stochastic}, but require adapting the training process. More precisely, the approach introduced in \cite{daras2024ambient} modifies the training loss to take into account the (unknown) degradation process, while the method in \cite{lu2025stochastic} is based on a specific training process where a small percentage of the data is clean. Outside the generative setting, there are also traditional manifold denoising techniques predating diffusion models \citep{hein2006manifold,gong2010manifold,wang2010manifold,fefferman2018fitting,JCAM-2023}. However, none of these approaches are efficient in generating new samples from the clean distribution.

This work introduces \textit{Manifold Attracted Diffusion (MAD)}, an efficiently implementable modification to the inference procedure of score-based models. We define the concept of an ``extended score'', which coincides with the standard score when that exists, but is also well-defined for Dirac delta distributions. In fact, it treats them essentially as Gaussians with a certain non-zero variance. This property can be leveraged in the inference procedure to reduce small off-manifold variations to almost zero while leaving larger on-manifold variations mostly unchanged, resulting in a soft thresholding effect. Thereby our method implicitly ``attracts'' the generated samples towards a low-dimensional structure, effectively filtering out noise.
Importantly, a suitable approximation to the extended score can be computed easily from an approximation to the standard score, which enables the use of established training methods as well as pretrained networks under some conditions that are, e.g., satisfied by the framework of \cite{edm}.
Our key contributions include:
\begin{itemize}[leftmargin=4mm]
    \item The formal definition and analysis of the extended score;
    \item An inference algorithm that reduces noise in generated samples without needing a special training procedure, making it compatible with established frameworks and pretrained models;
    \item Empirical validation on toy problems and real-world image data, such as FFHQ, AFHQ, ImageNet, and EMPIAR-11618 (cryo electron microscopy data).
\end{itemize}

We also distinguish our work from other diffusion-based approaches. Methods for posterior sampling, such as DPS \citep{chungdiffusion,chung2022improving}, are designed to solve inverse problems \citep{DBLP:journals/corr/abs-2410-00083}, such as denoising a single given image. Our goal is different: we learn from a dataset of noisy images to produce new, clean samples from the underlying distribution.

%Furthermore, unlike other inference-time quality improvements such as truncated sampling \citep{daras2025how}, MAD does not simply alter the sampling procedure. Instead, it introduces a novel, theoretically-grounded modification to the score function itself, using the extended score to enforce manifold adherence.

This paper is structured as follows. In Section~\ref{sec:background}, we review the necessary background on score-based diffusion models. In Section~\ref{sec:concept}, we formally introduce the extended score and analyze its properties. Section~\ref{sec:implementation} details our proposed inference algorithm and, in Section~\ref{sec:numerics}, we present numerical experiments that validate our approach. The related code is available at \url{https://github.com/delbraechter/MAD}.

% \subsection{Notation}
% [Not sure if necessary to have a subsection for this. Things to (potentially) define here: stochastic processes etc, normal distribution, convolution,... ]

\section{Background}\label{sec:background}
We will be working with the probability flow ODE formulation of diffusion models, largely following the framework of \cite{edm}. For more background and the connection to other diffusion model formulations we refer to \cite{edm} and the references therein. 

Given a data distribution $p_0$ on $\R^d$ we consider the stochastic process $(X_\sigma)_{\sigma\in[0,\sigma_{\rm max}]}$ with $X_0\sim p_0$ and
\begin{align*}
    X_\sigma = X_0 + \cN(0,\sigma^2\mathbb{I}),\quad \sigma>0.
\end{align*}
The corresponding densities are given by
\begin{align}\label{eq:psigma}
    p_\sigma = p_0 * g_{\sigma^2},
\end{align}
where $g_{\sigma^2}(x)=(2\pi \sigma^2)^{-d/2}\exp(-\frac{\|x\|^2}{2\sigma^2})$ is the density of $\cN(0,\sigma^2\mathbb{I})$. The central  idea behind score-based diffusion models is to generate a sample $x_0\sim X_0$ from a sample $x_{\sigma_{\rm max}} \sim X_{\sigma_{\rm max}}$, where $x_{\sigma_{\rm max}}$ is, in practice, approximated by simply sampling from $\cN(0,\sigma_{\rm max}^2\mathbb{I})$, as for a sufficiently large $\sigma_{\rm max}$ this should only introduce a negligible error. One way to achieve this is by evolving the ODE
\begin{align}\label{eq_pfODE}
    \d x_t = -\dot{\sigma}(t)\sigma(t) Sp_{\sigma(t)}(x_t)\d t,
\end{align}
where $\sigma\colon[0,T]\to[0,\sigma_{\rm max}]$ is some noise schedule and the score operator is given by
\begin{align*}
    S\colon P(\R^d)\to C(\R^d,\R^d),\qquad 
    p \mapsto \grad{x}\log p ,
\end{align*}
where $P(\R^d):=\{p\in C^1(\R^d,\R)\colon \int_{\R^d} p(x)\,dx=1,\; p(x)>0 \ \forall x\in\R^d\}$ is the set of densities that are positive everywhere in $\R^d$. 
Note that, for $\sigma>0$, $p_0 * g_{\sigma^2}$ is a density even if $p_0$ is not. The fact that the score may not be well-defined for $p_0$, e.g.\@ because the data distribution is supported on some lower-dimensional subset in $\R^d$, is avoided in practice by generating a sample $x_\delta$ with $\delta$ close to $0$.
It can be shown \citep{song2021,edm} that evolving a sample $x_{t_1}\sim X_{t_1}$ from $t_1$ to $t_2$ according to (\ref{eq_pfODE}) yields a sample $x_{t_2}\sim X_{t_2}$.

Of course, this is only useful provided that we have access to the score, which depends on the data distribution, and from which we usually  have only a finite number of samples. Remarkably, it turns out that a useful approximation of the score can be learned by training a neural network on these samples.
A common practice, motivated by numerical stability, is to learn the so-called denoiser function $D$, which is simply a shifted and scaled version of the score, i.e.\@ $D(x,\sigma) = \sigma^2 Sp_\sigma(x) + x$. This denoiser function is then approximated by a neural network $D_\theta$ using a loss based on
\begin{align*}
    \E_{y\sim X_0}\E_{\eta \sim \cN(0,\sigma^2\mathbb{I})}\|D_\theta(y + \eta)-y\|^2_2
\end{align*}
which is minimized by the denoiser function. However, as the expectation over the data distribution must be replaced by the empirical expectation based on the available data samples, this loss has, in general, many global minima. Nonetheless, employing neural networks with an adapted U-Net architecture \citep{unet1,ho2020denoising,unet2} appears to introduce sufficient bias towards a good approximation of the score. 

This approximation capability is primarily founded on the empirical observation that using it to generate images by evolving \eqref{eq_pfODE} (or related differential equations) produces realistic samples. An analytical description of how $D_\theta$ approximates $D$ is still an open problem. As the purpose of the work is to introduce a novel inference procedure, we will simply assume that we can obtain a suitable approximation to the score by some established training method.

\section{Concept: the Extended Score}\label{sec:concept}
Let $M(\R^d)$ denote the set of probability measures on $\R^d$.
\begin{definition}[Extended score]\label{def:exScore}
For $d \geq 1$, $p\in M(\R^d)$, $\gamma\in(0,\infty)$, let
\begin{align*}
     &H_\gamma\colon M(\R^d)\to C(\R^d,\R^d),\\    
     & p\mapsto (1+\gamma)S(p*g_\gamma) +\gamma\ddgamma S(p*g_\gamma)
     \end{align*}
     and
     \[
    H_0\colon \widetilde{M}(\R^d)\to \{f\colon \R^d\to\R^d\},\qquad 
    p\mapsto\lim_{\gamma\to 0} H_\gamma p,
    \]
where $$\widetilde{M}(\R^d):=\{p\in M(\R^d)\colon \lim_{\gamma\to 0} (H_\gamma p)(x)\in\R^d\ \forall x\in\R^d\}.$$
\end{definition}
We first note that $H_0$ coincides with the score for probability distributions with density in $P(\R^d)$, which we can view as a subset of $M(\R^d)$ by identifying a density function $p\in P(\R^d)$ with the measure given by $p(A)=\int_A p(x)\d x$ for $A\subseteq\R^d$ (see Section~\ref{sec:proofs} for a proof).
\begin{lemma}\label{lem:extension}
    Let $p\in P(\R^d)$. Then
    \(
        H_0p=Sp.
    \)
\end{lemma}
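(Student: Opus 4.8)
The plan is to reduce everything to the $\gamma\to0$ limit by exploiting that, for $\gamma>0$, the mollified density $q_\gamma:=p*g_\gamma$ is jointly smooth in $(x,\gamma)$ and strictly positive, so $Sq_\gamma=\nabla_x\log q_\gamma$ and all its $\gamma$-derivatives are classical objects and no regularization is needed. The structural fact that makes the computation tractable is that $g_\gamma$ is the heat kernel of $\partial_\gamma u=\tfrac12\Delta u$, so $\partial_\gamma q_\gamma=\tfrac12\Delta q_\gamma$. Interchanging $\nabla_x$ and $\ddgamma$ (legitimate for $\gamma>0$) gives
\[
\ddgamma Sq_\gamma=\ddgamma\nabla_x\log q_\gamma=\nabla_x\frac{\partial_\gamma q_\gamma}{q_\gamma}=\tfrac12\nabla_x\!\Big(\frac{\Delta q_\gamma}{q_\gamma}\Big),
\qquad\text{hence}\qquad
H_\gamma p=(1+\gamma)\,\nabla_x\log q_\gamma+\tfrac{\gamma}{2}\,\nabla_x\!\Big(\frac{\Delta q_\gamma}{q_\gamma}\Big).
\]
It then suffices to show, for each fixed $x$, that the first term converges to $Sp(x)$ and the second term to $0$.

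For the first term I would write $\nabla_x\log q_\gamma=\big((\nabla p)*g_\gamma\big)/q_\gamma$, moving one derivative onto $p$ (permissible since $p\in C^1$). Standard mollifier convergence gives $(\nabla p)*g_\gamma\to\nabla p$ and $q_\gamma\to p$ pointwise, and since $p>0$ everywhere this yields $(1+\gamma)\nabla_x\log q_\gamma\to\nabla p/p=Sp$.

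The heart of the argument is showing $\gamma\,\nabla_x(\Delta q_\gamma/q_\gamma)\to0$ even though $\Delta q_\gamma$ and $\nabla_x\Delta q_\gamma$ generally blow up as $\gamma\to0$. Here I would use $\Delta q_\gamma=p*\Delta g_\gamma$ together with the vanishing moments $\int\Delta g_\gamma=0$ and $\int y_i\,\Delta g_\gamma(y)\,dy=0$. After the rescaling $y=\sqrt\gamma z$ one obtains
\[
\Delta q_\gamma(x)=\frac1\gamma\int\big(p(x-\sqrt\gamma z)-p(x)+\sqrt\gamma\,z\!\cdot\!\nabla p(x)\big)\,\phi(z)\,(\|z\|^2-d)\,dz,
\]
where $\phi$ is the standard Gaussian density; since $p\in C^1$ the bracket is $o(\sqrt\gamma)$ pointwise in $z$, so dominated convergence gives $\Delta q_\gamma(x)=o(\gamma^{-1/2})$ — the decisive point being that one gets little-$o$, not just $O(\gamma^{-1/2})$. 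The same computation applied to $\partial_i p$ in place of $p$ (now subtracting only the constant term, since $\partial_i p$ is merely continuous) gives $\partial_i\Delta q_\gamma(x)=(\partial_i p)*\Delta g_\gamma(x)=o(\gamma^{-1})$. Combining these with $q_\gamma\to p>0$ and $\nabla q_\gamma\to\nabla p$ via the quotient rule gives $\nabla_x(\Delta q_\gamma/q_\gamma)=o(\gamma^{-1})$, whence $\tfrac{\gamma}{2}\nabla_x(\Delta q_\gamma/q_\gamma)\to0$. Putting the two terms together shows $\lim_{\gamma\to0}(H_\gamma p)(x)=Sp(x)\in\R^d$ for all $x$, i.e.\ $p\in\widetilde M(\R^d)$ and $H_0p=Sp$.

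I expect the main obstacle to be exactly this last step: one must squeeze a little-$o$ estimate (not merely a bound) out of $\Delta q_\gamma$ and $\nabla_x\Delta q_\gamma$ to beat the explicit $\gamma$ prefactor, and one must justify the mollifier limits and dominated-convergence steps. The latter need only mild decay/regularity control on $p$ and $\nabla p$ — automatic on compact sets and harmless globally for the densities of interest — but should be stated explicitly. The heat-equation identity $\partial_\gamma q_\gamma=\tfrac12\Delta q_\gamma$, which turns the $\gamma$-derivative of the score into the purely spatial quantity $\tfrac12\nabla_x(\Delta q_\gamma/q_\gamma)$, is what organizes the whole proof.
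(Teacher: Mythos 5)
Your proof is correct, and it shares the paper's overall skeleton — split $H_\gamma p$ into the two terms, show $(1+\gamma)S(p*g_\gamma)\to Sp$ by moving one derivative onto $p$ and using mollifier convergence, and show the $\gamma$-weighted derivative term vanishes — but the mechanism you use for the second, decisive step is genuinely different. Writing $q_\gamma:=p*g_\gamma$, you invoke the heat-equation identity $\ddgamma g_\gamma=\tfrac12\Delta g_\gamma$ to rewrite $\gamma\ddgamma S(p*g_\gamma)=\tfrac{\gamma}{2}\grad{x}\bigl(\Delta q_\gamma/q_\gamma\bigr)$ and then beat the explicit $\gamma$ prefactor with quantitative little-$o$ estimates, $\Delta q_\gamma=o(\gamma^{-1/2})$ and $\grad{x}\Delta q_\gamma=o(\gamma^{-1})$, obtained from the vanishing zeroth and first moments of $\Delta g_\gamma$ together with a Taylor/dominated-convergence argument (in fact the cruder bound $\Delta q_\gamma=o(\gamma^{-1})$ already suffices for the product term, so only $\grad{x}\Delta q_\gamma$ needs the refined treatment). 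The paper uses the same identity in the equivalent form $\gamma\ddgamma g_\gamma=\tfrac12(h_\gamma-dg_\gamma)$ with $h_\gamma(x)=\tfrac1\gamma\|x\|^2g_\gamma(x)$, but keeps the quotient-rule structure intact: each of $p*g_\gamma$, $\grad{x}p*g_\gamma$, $p*h_\gamma$, $\grad{x}p*h_\gamma$ converges to a finite limit by standard approximate-identity theorems (cited from Grafakos), and the numerator then cancels exactly in the limit, $p\,(C-d)\grad{x}p-\grad{x}p\,(C-d)p=0$, so no rate estimates are needed at all. Your moment cancellation $\int\phi(z)(\|z\|^2-d)\,\d z=0$ is precisely the statement $C=d$ that the paper never has to use. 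In short: your route is more self-contained and elementary in its ingredients but requires squeezing out genuine little-$o$ rates and a careful domination argument; the paper's route trades that for qualitative approximate-identity limits plus an algebraic cancellation. Both arguments rest on the same unstated mild hypotheses (boundedness or uniform control of $p$ and $\grad{x}p$ to justify the mollifier limits and dominated convergence), which the paper also leaves implicit and which you rightly flag.
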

It is, however, also well-defined for, e.g., Dirac delta measures. Specifically, let $\delta$ denote the Dirac delta at $0$, then
\begin{align*}
    H_\gamma \delta(x)&=(1+\gamma)S(g_\gamma)(x)+\gamma\ddgamma S(g_\gamma)(x)\\&= -\tfrac{(1+\gamma)x}{\gamma} +\gamma\ddgamma(-\tfrac{x}{\gamma})=-x.
\end{align*}
In particular it holds that $H_0\delta(x)=-x$, i.e.\@ we obtain\footnote{Note that in this simple case, $H_\gamma\delta$ is already the same as $H_0\delta$, which is not the case in more complicated scenarios, e.g.\@ for mixtures of Dirac deltas.} a function which, similar to the score for Gaussians, yields at each point a vector pointing towards the mode of the probability distribution. This generalizes to mixtures of Dirac delta distributions, where $H_0p$ will point towards the location of the nearest Dirac delta in the mixture (see Section~\ref{sec:proofs} for a proof).\\
\begin{samepage}
\begin{lemma}\label{lem:Dirac_mix}
    Let $n\in\N$, $\mu_1,\dots,\mu_n\in\R^d$, $c_1,\dots,c_n\in\R_+$ such that $\sum_i c_i=1$ and 
   \(
        p=\sum_{i\in[n]} c_i\delta_{\mu_i}.
    \)
    Then
        \begin{align*}
        H_0p(x) = -\sum_{i\in[n]} z_i(x)(x-\mu_i),
    \end{align*}
    where $W_i=\{x\in\R^d\colon \|x-\mu_i\|\leq\|x-\mu_j\|\,\forall\,j\in[n]\}$ is a Voronoi region and
    \begin{align*}
        z_i(x)
        &=\begin{cases}
            0 & x\notin W_i,\\
            1 & x\in \operatorname{int} W_i,\\
            c_i\left(\sum_{j:x\in W_j} c_j\right)^{-1} & x\in \partial W_i.
        \end{cases}
    \end{align*} 
    \end{lemma}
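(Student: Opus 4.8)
The plan is to compute $H_\gamma p$ explicitly for the mollified mixture $p * g_\gamma = \sum_{i\in[n]} c_i g_\gamma(\cdot - \mu_i)$ and then take the limit $\gamma \to 0$. Write $f_i(x,\gamma) := c_i g_\gamma(x-\mu_i)$, so that $p*g_\gamma = \sum_i f_i$ and $S(p*g_\gamma) = (\sum_i \nabla f_i)/(\sum_i f_i)$. Since each $g_\gamma$ is an isotropic Gaussian with variance $\gamma$, we have the clean identities $\nabla_x g_\gamma(x-\mu_i) = -\frac{x-\mu_i}{\gamma} g_\gamma(x-\mu_i)$ and $\ddgamma g_\gamma(x-\mu_i) = \bigl(-\frac{d}{2\gamma} + \frac{\|x-\mu_i\|^2}{2\gamma^2}\bigr) g_\gamma(x-\mu_i)$. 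Plugging these into the definition of $H_\gamma$ gives $H_\gamma p(x)$ as a ratio whose numerator and denominator are explicit linear combinations of the Gaussians $g_\gamma(x-\mu_i)$ with polynomial-in-$(x,1/\gamma)$ coefficients; I would simplify the numerator to see that, after cancellation, it equals $-\sum_i f_i(x,\gamma)(x-\mu_i)$ plus lower-order terms that vanish in the limit (the $(1+\gamma)S$ term contributes $-\sum_i (x-\mu_i) f_i$ at leading order, while the $\gamma \ddgamma S$ term supplies the derivative-of-the-softmax weights, and the algebra should collapse exactly as in the single-Dirac computation in the text).

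The second and more delicate step is the limit $\gamma \to 0$. For $x$ in the interior of a single Voronoi cell $W_i$, the Gaussian $g_\gamma(x-\mu_i)$ dominates all the others exponentially as $\gamma \to 0$ (since $\|x-\mu_i\| < \|x-\mu_j\|$ strictly for $j\neq i$), so in both numerator and denominator every term with index $j\neq i$ is suppressed by a factor $\exp\bigl(-\frac{\|x-\mu_j\|^2-\|x-\mu_i\|^2}{2\gamma}\bigr)$, which beats any polynomial growth in $1/\gamma$. Dividing numerator and denominator by $g_\gamma(x-\mu_i)$, everything but the $i$-th term tends to $0$, the polynomial prefactors on the surviving term cancel, and we are left with $-(x-\mu_i)$, matching $z_i(x)=1$, $z_j(x)=0$. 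On the boundary $\partial W_i$, several indices $j$ achieve the common minimal distance $r(x) := \min_j\|x-\mu_j\|$; after dividing by $g_\gamma(x - \mu_{i_0})$ for one such minimizing index $i_0$, exactly the terms with $\|x-\mu_j\| = r(x)$ survive in the limit (ratio $\to 1$), the rest vanish, and the cancellation of polynomial prefactors leaves $\bigl(\sum_{j: x\in W_j} c_j (x-\mu_j)\bigr)/\bigl(\sum_{j: x\in W_j} c_j\bigr)$; noting that this can be rewritten as $\sum_i z_i(x)(x-\mu_i)$ with the stated weights gives the claim. (One should also observe that, because $x$ lies on the boundary, all surviving $\mu_j$ are equidistant from $x$, but they need not coincide, so the convex-combination form is genuinely needed.)

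I expect the main obstacle to be \emph{bookkeeping in the boundary case and justifying the interchange of limit and the softmax structure}: one has to be careful that the $\gamma \ddgamma$ term, which produces coefficients of order $\|x-\mu_i\|^2/\gamma^2$, does not blow up — the point is that these large coefficients multiply Gaussians whose ratios to the dominant one either tend to $1$ (same distance) or to $0$ at an exponential rate that kills the polynomial, so a careful $\gamma \to 0$ asymptotic expansion of the ratio (e.g.\ writing each term as $c_j \exp(-(\|x-\mu_j\|^2 - r^2)/2\gamma)$ times a $\gamma$-polynomial and collecting powers) is what makes the cancellations transparent. Once that asymptotic analysis is set up cleanly, both the interior and boundary cases follow from the same computation, and the single-Dirac formula $H_0\delta(x) = -x$ already verified in the text is the $n=1$ sanity check.
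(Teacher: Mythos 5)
Your overall route is the same as the paper's: mollify, write $S(p*g_\gamma)$ as a softmax-weighted sum over the centers, and take $\gamma\to0$ with a Voronoi-cell case distinction; your interior/boundary analysis and the limiting weights $z_i$ are exactly what the paper obtains. The genuine gap sits at the point you yourself flag as the main obstacle: you never exhibit the cancellation that tames the $O(\gamma^{-2})$ coefficients coming from the $\gamma\ddgamma$ term, and the justification you give does not work there. For centers at the minimal distance from $x$ (including the dominant one itself), the ratio to the dominant Gaussian tends to $1$, so ``ratio tends to $1$'' times a coefficient of order $\|x-\mu_i\|^2/\gamma^2$ gives no finite limit; exponential suppression only disposes of the strictly farther centers. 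Likewise, the $(1+\gamma)S$ term alone is of order $\gamma^{-1}$, so it cannot by itself ``contribute $-\sum_i(x-\mu_i)f_i$ at leading order''.

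What actually closes the argument --- and what the paper makes explicit --- is an exact algebraic cancellation. Writing $H_\gamma p=\gamma S(p*g_\gamma)+\ddgamma\big(\gamma S(p*g_\gamma)\big)$ (equivalently, carrying out the numerator simplification you defer), one finds $\gamma S(p*g_\gamma)(x)=-\sum_i w_i(x)(x-\mu_i)$ with $w_i=h_i/\sum_j h_j$, $h_i(x)=c_ie^{-\|x-\mu_i\|^2/(2\gamma)}$, whose limit is precisely the claimed formula, while the derivative term equals $-\tfrac{1}{2\gamma^2}\sum_{i,j}w_iw_j\,(x-\mu_i)\big(\|x-\mu_i\|^2-\|x-\mu_j\|^2\big)$. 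In this sum every pair with $\|x-\mu_i\|=\|x-\mu_j\|$ --- exactly the equidistant pairs your ratio argument cannot control --- vanishes identically thanks to the antisymmetric difference factor, and for all remaining pairs $w_iw_j$ is exponentially small, which beats $\gamma^{-2}$. Your ``expand and collect powers'' plan would surface this factor if executed, but as written the proposal only asserts that the algebra ``collapses as in the single-Dirac computation''; the single-Dirac case exhibits only the $\gamma^{-1}$-level cancellation, not the $\gamma^{-2}$-level antisymmetry that mixtures require, so the boundary/equidistant bookkeeping is left open at the one step where the lemma could fail.
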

\end{samepage}
    
The first two cases  in the expression for $z_i(x)$ cover almost every (w.r.t\@ the Lebesgue measure) $x\in\R^d$ and the third case is only needed if $x$ is equally distant to multiple $\mu_i$.
By combining the expressions of the extended score for a Dirac delta and for a non-degenerate Gaussian distribution, it is possible to derive an explicit expression of the extended score for any (possibly degenerate) Gaussian distribution. This is due to how the extended score behaves with respect to products of measures, see Appendix~\ref{sec:product} for the details.

We have the rather peculiar property that $H_0\delta = Sg_1$, i.e.\@ the extended score of the Dirac delta matches the score of a variance $1$ Gaussian\footnote{Changing the factor $(1+\gamma)$ to $(1+\tfrac{\gamma}{v})$ in the definition of $H_\gamma$ would lead to the equivalency $H_0\delta = Sg_v$ instead of $H_0\delta = Sg_1$. We will exploit the fact that this equivalency is adjustable in the next section, albeit only implicitly as we also need to account for the fact that we cannot take the $\gamma\to 0$ limit in practice.}. At this point, it should be noted that it is impossible to find a nice extension of the score operator that includes Dirac delta distributions, as
\begin{align*}
    Sg_\gamma(x)=-\tfrac{x}{\gamma},
\end{align*}
which diverges for $x\neq 0$ as $\gamma\to 0$. As such any extension is necessarily discontinuous with respect to any topology in which $\lim_{\gamma\to 0}g_\gamma=\delta$.

However, the ability of the extended score to treat distributions with positive variance like they were Dirac delta distributions will, in fact, be the cornerstone of our proposed inference technique. 
Under the manifold hypothesis, the clean  data distribution  has significant variation only along a small number of directions. Given noise in the ambient pixel space, which has a much higher dimension than the image manifold, the variance in off-manifold directions due to noise should be much smaller (e.g.\@ for isotropic Gaussian noise $\eta$ the variance in each direction is of order $\sim\! d^{-\frac{1}{2}} \E[\|\eta\|]$). Thus, noise can be suppressed by using the extended score. This principle is formally justified in Appendix~\ref{sec:product}, where we derive the extended score for a distribution supported on a low-dimensional subspace and show that it strongly attracts samples in the off-manifold directions while preserving the standard score in the on-manifold directions.

It is worth mentioning that the idea of decomposing the score function along the tangential and normal directions to the manifold was already exploited in 
 \cite{pmlr-v202-chen23o} (in the case of linear subspaces) and in \cite{liu2026improving} (in the case of general manifolds). Several interesting theoretical aspects are discussed in these papers, which, however, do not deal with the denoising task discussed in this work.

\section{Implementation}\label{sec:implementation}

%\gray{We would like to have an inference procedure which makes use of this property to generate samples which contain less of the noise than what would be obtained by running inference with the usual score.} 
In view of the property of the extended score just discussed, we would like to design an inference procedure that is able to generate samples with less noise, if compared to the samples obtained via the usual score. 
To this end we first note that, due to (\ref{eq:psigma}), we have
\begin{align}\begin{split}\label{eq:VE_trick}
S(p_{\sigma(t)}*g_\gamma)&=S(p_0*g_{\sigma(t)^2}*g_\gamma)\\
&=S(p_0*g_{\sigma(t)^2+\gamma})\\&=S(p_{\sqrt{\sigma(t)^2+\gamma}}).
\end{split}\end{align}
So, given a network trained to approximate the score for any $\sigma\in[0,\sigma_{\rm max}]$, as is the case in the framework of \cite{edm}, we also have an approximation to the extended score with a given small $\gamma>0$, where the $\gamma$-derivative can be obtained, e.g.\@, by a finite difference\footnote{Note that one could also compute the exact derivative of the score network using automatic differentiation (AD). However, since PyTorch is not optimized for forward mode AD this is significantly more expensive than a second evaluation of the network and did not yield any clear improvements in the numerical experiments.} approximation. As the desirable properties of the extended score hold in the $\gamma\to 0$ limit, which we cannot compute directly, we instead need to choose a suitable dependence $\gamma(t)$ with $\lim_{t\to 0}\gamma(t)=0$.

 A basic way to conduct inference with the standard score is evolving (\ref{eq_pfODE})
 simply via Euler method, i.e.\@ initializing with $x_0\sim\cN(0,\sigma(t_0)^2\mathbb{I})$ and iterating
\begin{align}\label{eq:basic_Euler}
    x_{i+1} = x_{i}- (t_{i+1}-t_i)\dot{\sigma}(t_i)\sigma(t_i) S_\theta(\sigma(t_i),x_i),
\end{align}
where $S_\theta(\sigma,x)\approx Sp_\sigma(x)$ is the learned approximation of the score. We will instead, based on Definition~\ref{def:exScore}, consider the iteration 
\small
\begin{equation}\label{eq:modified_inference}
    x_{i+1} = x_{i}- m(t_i)(t_{i+1}-t_i)\dot{\sigma}_\gamma(t_i)\sigma_\gamma(t_i) \alpha_i, 
\end{equation}
\normalsize
where
\[
\alpha_i=\big((1+\gamma(t_i))S_\theta(\sigma_\gamma(t_i),x_i) +\gamma(t_i)\ddgamma S_\theta(\sigma_\gamma(t_i),x_i)\big),\]
$\sigma_\gamma(t)=\sqrt{a\sigma(t)^2+b\gamma(t)}$, and $m(t_i)$ is a correction factor determined as explained below.

Here, $a,b>0$ are manually chosen parameters. Note that this reduces to (\ref{eq:basic_Euler}) for $a=1$, $b=0$, and $m(t)=(1+\gamma(t))^{-1}$, so essentially the choices of $a$, $b$, and $m$ determine to what extent we would like the inference to push points onto a manifold at a given time step. More precisely, increasing $a$ or reducing $b$ corresponds to lessening the effect of the extended score and thereby limiting the denoising effect. Conversely, reducing $a$ or increasing $b$ yield a stronger denoising effect. As such, $a$ and $b$ can be seen as regularization parameters, and their choice is problem dependent. While in the idealized case of the $\gamma\to 0$ limit, analyzed in Section~\ref{sec:concept}, Diracs and variance $1$ Gaussians are treated in the same way, the choice of $a$ and $b$ adjusts this equivalence in practice, causing variations below this (soft) threshold to be compressed significantly. Note that we currently do not determine the explicit dependence of this threshold on the parameters. However, since we cannot expect to know the optimal threshold for a data distribution in practice, a parameter optimization seems unavoidable anyway. We further enforce $\sigma_\gamma(t)=t$ to match the scheduling of \cite{edm}, as it allows us to take advantage of their optimized choice of time steps and leads to better comparability. Lastly, we introduce a parameter $p>0$ to regulate the relative speed of convergence of $\sigma$ and $\gamma$ via $\gamma(t)=\sigma(t)^p$.

With these choices, (\ref{eq:modified_inference}) simplifies to 
\begin{equation}\label{eq:modified_inference2}
    x_{i+1} = x_{i}- m(t_i)(t_{i+1}-t_i)t_i \big((1+\gamma(t_i))S_\theta(t_i,x_i)+\tfrac{b\gamma(t_i)}{2t_i}\ddsigma S_\theta(t_i,x_i)\big),
\end{equation}
where $\ddsigma S_\theta$ denotes the derivative of $S_\theta\colon\ [0,\sigma_{\rm max}]\times \R^d\to\R^d$ w.r.t\@ its first argument, and $\gamma(t_i)$ is a solution of
\begin{align*}
    \sqrt{a\gamma(t_i)^{\frac{2}{p}}+b\gamma(t_i)}=t_i.
\end{align*}
We observe that in the case of $S_\theta(\sigma,x)=-\tfrac{x-\mu}{\sigma^2}$, i.e.\@ an initial distribution consisting of a Dirac delta at $\mu$,
the standard score inference rule (\ref{eq:basic_Euler}) with $\sigma(t)=t$ evaluates to 
\begin{equation}
\begin{split}
    \label{eq:standard_score_Dirac}
    x_{i+1}&=x_i - \tfrac{t_i-t_{i+1}}{t_i}(x_i-\mu)\\&=(1-\tfrac{t_i-t_{i+1}}{t_i})x_i + \tfrac{t_i-t_{i+1}}{t_i}\mu .
    \end{split}
\end{equation}
This means that, if $\tfrac{t_i-t_{i+1}}{t_i}\ge \beta>0$ for every $i$, we have
\begin{align*}
    \|x_{i+k} -\mu\|\leq(1-\beta)^k\|x_i-\mu\|,
\end{align*}
i.e.\@ convergence to $\mu$ at a geometric rate. We would like our inference rule to mimic this behavior for simple Dirac deltas, which is accomplished by choosing the correction factor as
\begin{align}\label{eq:correction_factor}
    m(t_i)
=\left(1+\gamma(t_i)-\tfrac{b\gamma(t_i)}{t_i^2}\right)^{-1},
\end{align}
see Section~\ref{sec:proofs} in the appendix for a detailed derivation.
Note that, as long as $\gamma(t_i)\in o(t_i^{-2})$ we have $\lim_{t\to 0}m(t)=1$. Putting it all together, we arrive at the following algorithm.
\begin{algorithm*}
    \caption{Inference with extended score} \label{alg:main} 
    \begin{algorithmic}[1]
       \STATE \textbf{function} \textsc{Inference}$(S_\theta,a,b,p,\Delta,(t_i)_{i\in\{0,\dots,N\}})$ 
        \STATE \textbf{sample} $x_0\sim\cN(0,t_0^2\mathbb{I})$ \COMMENT{Generate Gaussian sample}
        \FOR{$i\in\{0,\dots,N-1\}$}
            \STATE  $\gamma_i =  \mathrm{\textbf{solve}}(a\gamma_i^{2/p} + b\gamma_i -t_i^2 =0)$ \COMMENT{Determine $\gamma_i$ using a root finding algorithm}
            \STATE $s_i = S_\theta(t_i,x_i)$ \COMMENT{Evaluate the score network}
            \STATE $\widetilde{s}_i = S_\theta((1+\Delta)t_i,x_i)$
            \STATE $s'_i = \tfrac{\widetilde{s}_i-s_i}{\Delta \cdot t_i}$ \COMMENT{Compute an approximation to the derivative}
            \STATE $m_i=(1+\gamma_i-\tfrac{b\gamma_i}{t_i^2})^{-1}$ \COMMENT{Compute correction factor}
            \STATE $x_{i+1}= x_i-m_i(t_{i+1}-t_i)t_i((1+\gamma_i)s_i+\tfrac{b\gamma_i}{2t_i}s'_i)$ \COMMENT{Update sample}
        \ENDFOR 
        \STATE \textbf{return} $x_N$
    \end{algorithmic}
\end{algorithm*}

% You may use color figures.
% However, it is best for the
% figure captions and the paper body to make sense if the paper is printed
% either in black/white or in color.
% \begin{figure}[h]
% \begin{center}
% %\framebox[4.0in]{$\;$}
% \fbox{\rule[-.5cm]{0cm}{4cm} \rule[-.5cm]{4cm}{0cm}}
% \end{center}
% \caption{Sample figure caption.}
% \end{figure}

% \begin{table}[t]
% \caption{Sample table title}
% \label{sample-table}
% \begin{center}
% \begin{tabular}{ll}
% \multicolumn{1}{c}{\bf PART}  &\multicolumn{1}{c}{\bf DESCRIPTION}
% \\ \hline \\
% Dendrite         &Input terminal \\
% Axon             &Output terminal \\
% Soma             &Cell body (contains cell nucleus) \\
% \end{tabular}
% \end{center}
% \end{table}
%\newpage
\section{Numerical Experiments}\label{sec:numerics}

In section~\ref{sub:51} (toy examples in $\R^2$) and in section~\ref{sub:ffhq} (FFHQ, AFHQv2, and ImageNet) we present some illustrative numerical simulations to build intuition of the ``manifold attraction'' property of the extended score on clean datasets. Further, in sections~\ref{sub:synth} (synthetic dataset) and \ref{sub:cifar} (CIFAR-10) we provide controlled experiments showing qualitatively and quantitatively (FID scores) the denoising effect of the extended score. Finally, in section~\ref{sub:real} we test MAD with real Cryo-EM data. 

We emphasize that MAD addresses a specific blind generative denoising task where only noisy data is available, and the specific degradation model may be
unknown. In this context, there are no established benchmarks, and a comparison with other (non-generative) manifold denoising approaches would not be meaningful, because they tackle a different problem.

\subsection{Illustrative Examples in \texorpdfstring{$\R^2$}{R2}}
\label{sub:51}
\begin{figure}
    \centering
    \includegraphics[width=0.95\linewidth]{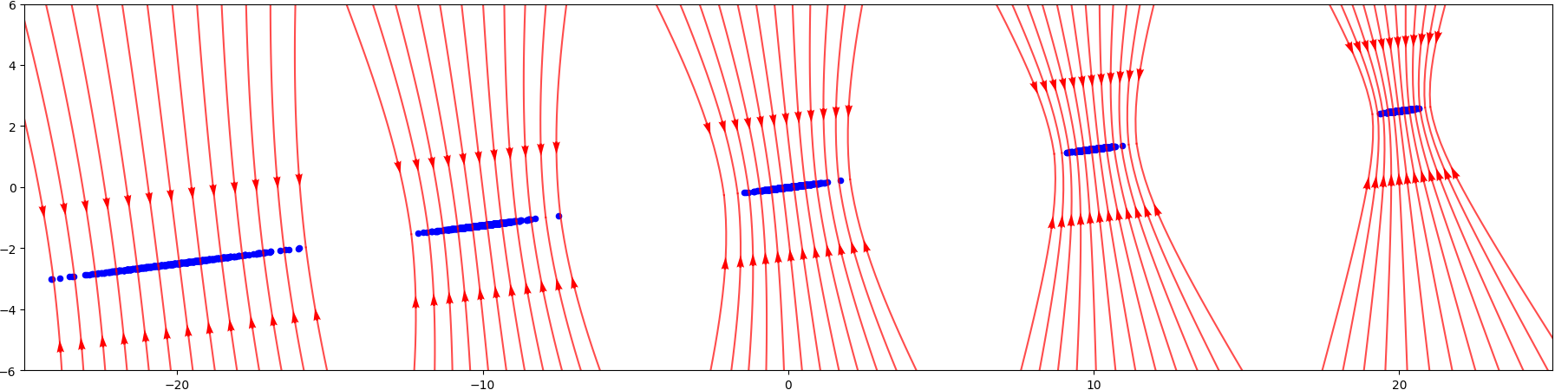}
    \includegraphics[width=0.95\linewidth]{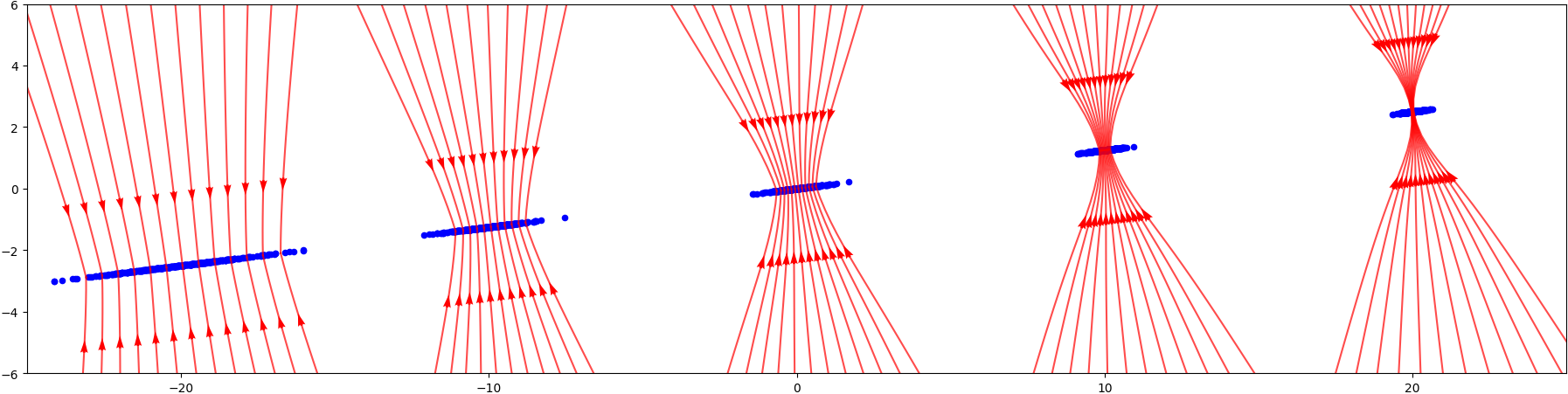}
    \caption{Samples (blue) of $1$-dimensional Gaussian mixture supported on an affine subspace with variances $2$, $0.5$, $0.25$, $0.1$, and $0.05$ respectively. Inference trajectories for standard score (top) and extended score (bottom) are indicated in red.} 
    \label{fig:1Dtraj}
\end{figure}

\begin{figure}
    \centering
    \begin{subfigure}{0.5\textwidth}
    \includegraphics[trim={0.5cm 0 0.25cm 0.2cm},clip,width=\linewidth]{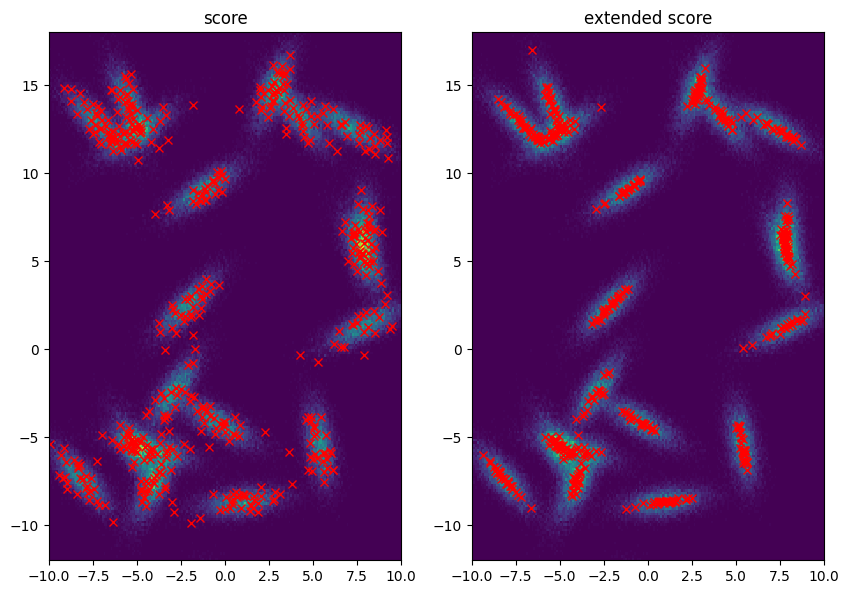}
    \subcaption{Tilted Gaussians}\label{fig:2DGaussians_a}
    \end{subfigure}
    \begin{subfigure}{0.5\textwidth}
    \includegraphics[trim={0.5cm 0 0.25cm 0.2cm},clip,width=\linewidth]{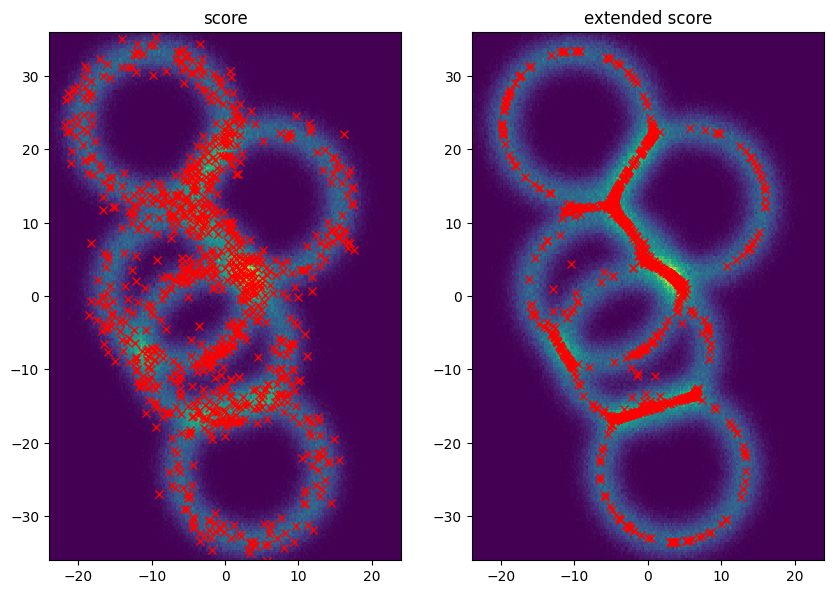}      
    \subcaption{Radial Gaussians}\label{fig:2DGaussians_b}
    \end{subfigure}
    \caption{Target distribution is displayed as a histogram colormap. Red crosses indicate samples generated via standard (left) or extended (right) score inference. Extended score parameters are $p=1.3$, $a=1$, $b=1.1$ for (a) and $p=2$, $a=2$, $b=30$ for (b). 
    %Comparison of inference results for Gaussian mixture of the form $\sum_{i=1}^N \tfrac{1}{2\pi N} \det(\Sigma_i)^{-\frac{1}{2}}e^{-\frac{1}{2}(x-\mu_i)^T\Sigma_i^{-1}(x-\mu_i)}$ with $\Sigma_i=R_i\begin{pmatrix}1.5 &0\\0 & 0.05 \end{pmatrix}$, where the $R_i$ are randomly chosen rotation matrices. Extended score is applied with parameters ... .
    }
    \label{fig:2DGaussians}
\end{figure}

 We visualize the evolution according to the extended score in $\R^2$ for relatively simple distributions, where the (extended) score can be computed explicitly (see Appendix~\ref{sec:product}). Figure~\ref{fig:1Dtraj} displays the inference trajectories for a degenerate Gaussian mixture, which is supported on an $1$-dimensional affine subspace, i.e.\@ it behaves like a Dirac in the off-manifold direction. The means are chosen such that the different mixture components affect each other very little.
 %behaves like a Dirac delta at $0$ in the $x_2$-direction and a Gaussian in the $x_1$-direction as the means are chosen such that the different mixture components affect each other very little.
 We can see that for the Gaussians with low variance using the extended score causes all trajectories to end up at the respective means, i.e.\@ it behaves as if we had Dirac deltas (in the on-manifold direction) at these locations. For the higher variance Gaussians the extended score trajectories still end up closer to the mean than they would for standard score, but this variance reduction effect decreases significantly as the variance of the Gaussians increases. All together, we essentially have a soft thresholding effect, where variances below a certain value are shrunk to $0$, while large enough variances are left almost unchanged.

% \begin{figure*}
%     \centering
%     \includegraphics[width=0.95\linewidth]{1Dtraj.png}
%     \caption{Comparison of inference trajectories (red) and samples (blue) for Gaussian mixture {\small $p(x_1,x_2)=\delta(x_2)\sum_{i=1}^5 \tfrac{1}{5}(2\pi v_i)^{-\frac{1}2{}}\mathrm{exp}(-\frac{(x_1-\mu_i)^2}{2v_i})$} with equal weights, variances $v=(0.2,0.5,1,2,4)$, and means $\mu=(-20,-10,0,10,20)$. Extended score is applied with parameters $a=b=p=1$.} 
%     \label{fig:1Dtraj}
% \end{figure*}

In Figure~\ref{fig:2DGaussians_a} we consider a mixture of Gaussians with covariance matrices  $\Sigma_i=R_i\mathrm{diag}(1.7,0.2)$, where the $R_i$, $i\in\{1,\dots,21\}$, are randomly chosen rotation matrices, i.e.\@ around each mean of the mixture we have large variance in one direction and small variance in the direction orthogonal to the first one. Locally this can be viewed as a  $1.5$-variance Gaussian along some $1$-dimensional affine linear subspace to which Gaussian noise with covariance matrix $0.2\mathbb{I}$ is added, resulting in a $1.7$ variance in one direction and a $0.2$ variance in the other. The denoising effect is clear: the extended score inference moves the points onto the affine linear subspace corresponding to the first principal direction of each Gaussian, while leaving the spread along the affine linear subspace almost unchanged.
In Figure~\ref{fig:2DGaussians_b} we demonstrate that this effect is not limited to affine linear manifolds, by considering a mixture of radial Gaussians of the form {\small $p(x) = C\sum_{i=1}^5 \mathrm{exp}(-\tfrac{(\|x-\mu_i\|-r)^2}{2v})$} with variance $v=2.5$, radius $r=10$, randomly chosen means $\mu_i$, and $C$ a normalizing constant.

\subsection{Effects on Noiseless FFHQ, AFHQv2, and ImageNet}

\label{sub:ffhq}

While the last section provides some basic intuition for the effects of our inference method, it is, of course, unclear how this translates to much more complicated distributions in much higher dimensions. We will explore this question using our inference method with the pretrained score networks from \cite{edm} for\footnote{Note that these networks have been trained on images from these datasets which have been downsampled to  $64\!\times\! 64$ resolution.} FFHQ (\cite{FFHQ}), AFHQv2 (\cite{AFHQ}), and ImageNet (\cite{ImageNet}), where we also use the time schedule suggested in \cite{edm}, and only vary the hyperparameters specific to our method, i.e.\@ $a$, $b$, $p$, and $\Delta$ in Algorithm~\ref{alg:main}. As these datasets are arguably essentially noiseless, the effect of the extended score will be visible on the different features of the images. The following sections consider noisy data.

\begin{figure}[ht]
    \centering
    \includegraphics[trim={0 6.775cm 0 0},clip,width=0.98\linewidth]{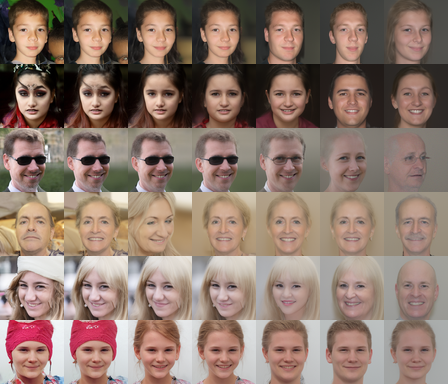}
    
    \includegraphics[trim={0 6.775cm 0 0},clip,width=0.98\linewidth]{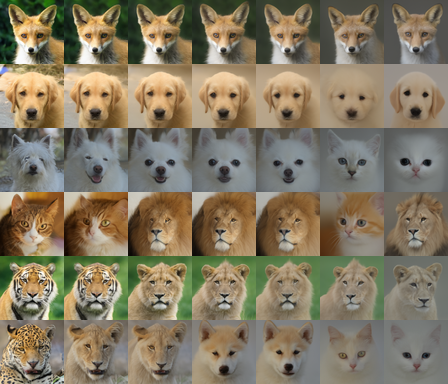} 
    \caption{In both subfigures, all images in each row start from the same latent noise sample, and the leftmost column uses standard score whereas second to last columns use Algorithm~\ref{alg:main} with $\Delta=0.0001$, $p=8$, $a=2.5$, and, left to right, $b\in\{2,5,10,20,40,80\}$. Further examples in Figure~\ref{fig:faces_appendix}.}\label{fig:faces_main}
\end{figure}

In Figure~\ref{fig:faces_main}, we showcase the impact of our method on the generation of human or animal faces using the pretrained score networks for FFHQ and AFHQv2, respectively. Note that increasing the parameter $b$ in Algorithm~\ref{alg:main} results in a greater impact of the extended score, i.e.\@ we expect stronger attraction to the manifold of primary variation. We observe that for all values of $b$, we generate samples with qualitatively the same facial features, but for larger values of $b$, we always get a plain single-colored background. 
In case of faces, it seems quite clear that the direction of primary relevance should correspond to essential facial features that are present in all the data, whereas the background variation, as well as features like glasses and head wear, are split across a much larger number of directions and are thresholded out first. Furthermore, as illustrated in the appendix (see Figures~\ref{fig:FFHQ_ab} and \ref{fig:faces_appendix}), the method demonstrates improved stability across hyperparameter variations on FFHQ compared to simpler synthetic datasets. This indicates that the higher complexity of natural images, reflected into the higher dimensionality of the underlying manifold, may inherently improve hyperparameter robustness.
We  also observe similar effects on ImageNet, e.g.\@ in the examples in Figure~\ref{fig:ImageNetmain}, where the extended score inference seems to focus on generating one primary object, while progressively thresholding out everything else as we increase $b$. However, since ImageNet contains a large variety of objects, the primary directions differ between classes, and thus the effect of extended score inference with a given choice of hyperparameters is much more varied across different starting noise images (see Figure~\ref{fig:INET_appendix} in the Appendix).  

\begin{figure}
    \centering
    \includegraphics[width=0.99\linewidth]{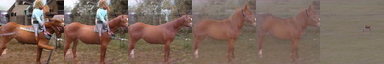}
    \includegraphics[width=0.99\linewidth]{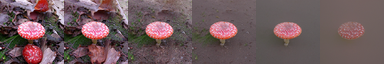}
    \caption{Each row starts from the same latent noise sample, and the leftmost column uses standard score whereas second to last columns use Algorithm~\ref{alg:main} with $\Delta=0.001$, $p=12$, and $a=8$, $b\in\{5,20,50,80,250\}$ for the top row, $a=4$, $b\in\{2,10,30,60,100\}$ for the bottom row.}\label{fig:ImageNetmain}
\end{figure}

Although Algorithm~\ref{alg:main} is deterministic when started with the same random seed, i.e.\@ such that it starts from the same latent noise image, it relies on a score network trained through a highly stochastic process. Consequently, even small changes of the iterates $x_i$ can build up and lead to the generation of a significantly different image. As can be seen in Figure~\ref{fig:faces_main}, in general the algorithm is not particularly suited for orthogonal projection on the manifold of, in this case, faces, i.e.\@ it does not simply generate the same face as standard inference would but without background. However, as showcased in Figure~\ref{fig:FFHQ_ab} in the Appendix, there are often certain ranges of parameters that lead to rather similar images and may be used for manual adjustment of a generated image.

\begin{figure}
    \centering
    \includegraphics[trim={0 2.2cm 0 0},clip,width=0.98\linewidth]{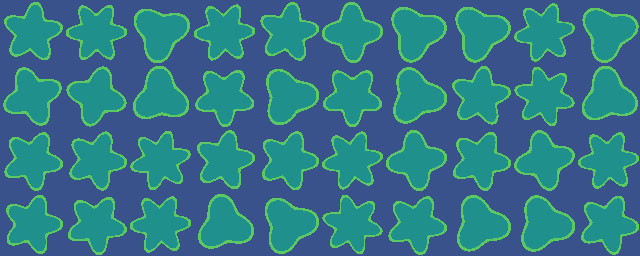}
    
    \includegraphics[trim={0 2.2cm 0 0},clip,width=0.98\linewidth]{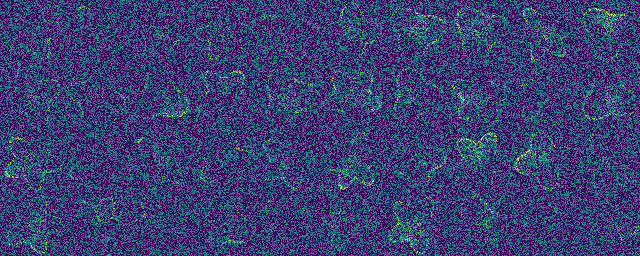}
    \caption{Grid of clean samples from the synthetic data set (top); grid of corresponding noisy samples (bottom). Each sample is an $8$-bit grayscale $64\!\times\!64$ image, displayed via viridis colormap.}
    \label{fig:synth_data}
\end{figure}

\subsection{Denoising Synthetic Data}

\label{sub:synth}
% \begin{figure}
%     \centering
%     \includegraphics[width=0.43\linewidth]{synth_clean.png}
%     \includegraphics[width=0.43\linewidth]{synth_noisy.png}
%     \includegraphics[width=0.12\linewidth]{colourbar.png}
%     \caption{Grid of clean samples from the synthetic data set (left); grid of corresponding noisy samples (right). Each sample is an $8$-bit grayscale $64\!\times\!64$ image, displayed via viridis colormap.}
%     \label{fig:synth_data}
% \end{figure}

\begin{figure}
    \centering
    \includegraphics[trim={0 2.2cm 0 0},clip,width=0.98\linewidth]{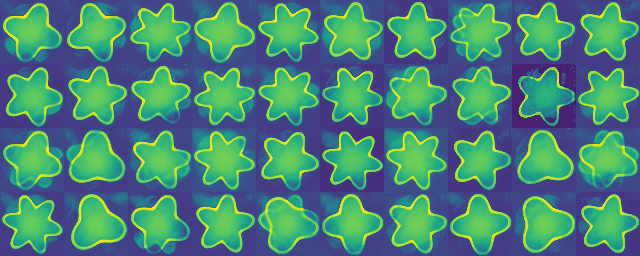}
    
    \includegraphics[trim={0 2.2cm 0 0},clip,width=0.98\linewidth]{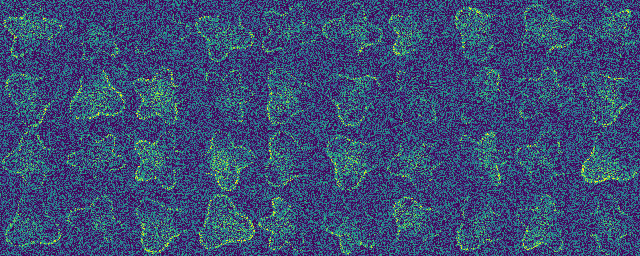}
    \caption{Grid of samples generated from consecutive random seeds by Algorithm~\ref{alg:main} with parameters $\Delta=0.02$, $p=8$, $a=0.002$, and $b=15$ (top); grid of samples generated by inference with standard score (bottom). Each sample is an $8$-bit grayscale $64\!\times\!64$ image, displayed via viridis colormap.}
    \label{fig:synth_results}
\end{figure}

We created a synthetic dataset of clean samples, added noise, and trained a diffusion model only on the noisy data. Our clean distribution consists of $8$-bit grayscale $64\!\times\!64$ images depicting $4$ different shapes, each appearing with equal probability and rotated by an angle chosen uniformly at random (see Figure~\ref{fig:synth_data}, top). As such, this probability distribution is essentially %\footnote{Due to discretization effects its support is, strictly speaking, not a union of four $1$-dimensional manifolds in pixel space, but very closely concentrated around one.\de{**not sure how necessary/helpful this comment is**} \ga{** I'd be happy to remove it, I don't think it's relevant for our aims **}} 
supported on $4$ disjoint $1$-dimensional manifolds in pixel space. We then add two types of corruption to obtain noisy samples. First, we blot out large parts of the shape, by uniformly randomly picking $50$ locations on the boundary of the shape and subtracting Gaussian bump functions centered at those locations from the clean image. Second, we add i.i.d.\@ Gaussian noise to each pixel, resulting in images where the original shapes can hardly be discerned from a single image (see Figure~\ref{fig:synth_data}, bottom). We then trained a DDPM++ model\footnote{Using the method from the github to \cite{edm} with default settings except for $\mathrm{cond}=0$ and $\mathrm{augment}=0$, which took roughly $14$ hours on a single $A100$ GPU.} on a set of $100\,000$ of  noisy samples for a duration of $3$~mimg, i.e.\@ $30$ repeats per image. When using the trained network for standard score inference we obtain a reproduction of the corrupted samples (see Figure~\ref{fig:synth_results}, bottom). In contrast, running our extended score inference, we obtain samples showing the original shapes (see Figure~\ref{fig:synth_results}, top). Figure~\ref{fig:synth_ab_grid} in the Appendix illustrates the dependence on the extended score parameters $\Delta$, $a$, $b$, and $p$. In order to test whether our method works on corrupted training data beyond the case of Gaussian noise, we also trained a DDPM++ model, using the same training settings, on a set of $100\,000$ synthetic samples, where each pixel was set to $0$ with probability $0.5$ (see Figure~\ref{fig:synth-pixel} in Appendix \ref{sub:additionalnumerics}). As can be seen in Table~\ref{table}, for both cases extended score inference leads to a significant improvement in FID compared to the standard score, which generates samples with essentially the same distance to the clean data as the noisy data.

\begin{table*}
\begin{center}
\begin{tabular}{ |l|c|c|c| } 
 \hline
Noisy dataset  & training data & standard generation & extended generation\\
 \hline
 synthetic data, Gaussian noise & $320.56$ & $319.42$ &  $\textbf{189.51}$\\ 
 synthetic data, pixel removal & $233.86$ & $234.40$ &  $\textbf{163.14}$\\ 
 CIFAR-$10$, Gaussian noise & $136.87$ & $147.47$ &  $\textbf{77.15}$\\ 
 FFHQ, Gaussian noise & $133.60$ & $134.42$ &  $\textbf{74.52}$\\ 
 FFHQ, Shot noise & $257.28$ & $257.42$ &  $\textbf{101.12}$\\ 
 FFHQ, pixel removal & $372.21$ & $387.87$ &  $\textbf{104.00}$\\ 
 \hline
\end{tabular}
\end{center}
\caption{FIDs with respect to the corresponding clean dataset, computed from $10\,000$ images.
\label{table}}
\end{table*}

% \begin{figure}
%     \centering
%     \includegraphics[width=0.43\linewidth]{synth_extended.png}
%     \includegraphics[width=0.43\linewidth]{synth_standard.png}
%     \includegraphics[width=0.12\linewidth]{colourbar.png}
%     \caption{Grid of samples generated by Algorithm~\ref{alg:main} with parameters $\delta=0.02$, $p=8$, $a=0.002$, and $b=15$ (left); grid of samples generated by inference with standard score (right). Each sample is an $8$-bit grayscale $64\!\times\!64$ image, displayed via viridis colormap. \ga{** Perhaps we've already discussed this, but the angles of the generated images are almost always the same, and not uniform in $0-2\pi$. Is there a way to slightly modify the parameters and get more uniform angles? **}}
%     \label{fig:synth_results}
% \end{figure}

\subsection{Denoising CIFAR-10 and FFHQ}
\label{sub:cifar}
\begin{figure}
    \centering
    \includegraphics[trim={0 7.91cm 0 0},clip,width=0.98\linewidth]{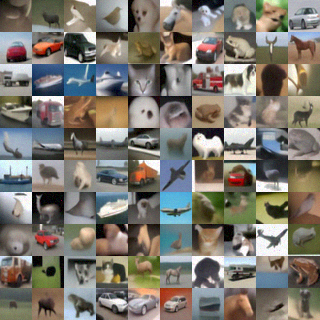}
    
    \includegraphics[trim={0 7.91cm 0 0},clip,width=0.98\linewidth]{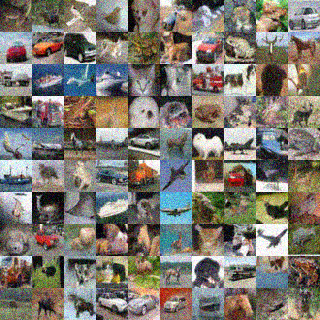}
    \caption{Standard score inference (bottom) and extended score inference (top) for CIFAR-$10$ with additive Gaussian noise.}
    %Trained on CIFAR-$10$ with additive Gaussian noise and generated by standard score inference (bottom) or by extended score inference (top).}
    \label{fig:cifar_additive_Gaussian}
\end{figure}

Next, we test whether the denoising capabilities of our method extend to real data. Specifically we train DDPM++ models on CIFAR-$10$ and FFHQ with additive Gaussian noise. While standard score inference produces image with roughly the amount of noise of the images it was trained on, extended score inference produces significantly cleaner images, both visually (Figure~\ref{fig:cifar_additive_Gaussian} and Figure~\ref{fig:ffhq_additive_Gaussian}) as well as quantitatively in the form of FID scores (Table~\ref{table}). 
We note that the FID values for standard generation very closely match the values for the respective training sets, indicating that in training the diffusion model simply learns to reproduce the corruption and does contribute to the denoising effect.
The intermediate images generated during inference (see Figure~\ref{fig:paths} in Appendix~\ref{sub:additionalnumerics}) strongly indicate that this effect cannot simply be achieved by truncated sampling. 

We also train DDPM++ models on FFHQ corrupted by shot/Poisson noise (Figure~\ref{fig:ffhq_PN}) or pixel removal noise (Figure~\ref{fig:ffhq_PR}).
While the images generated by extended score inference are of somewhat worse quality than in the Gaussian case, the FID values of the training sets are significantly larger, i.e.\@ we are dealing with a higher degree of corruption. In fact, MAD achieves a larger FID improvement over standard generation than in the Gaussian case.
%the relative improvement in FID compared to training set resp.\@ standard generation is significantly higher (see Table~\ref{table}).
More importantly, it demonstrates that MAD is effective for non-Gaussian corruptions, which is evidence that it does not simply modify the amount of Gaussian noise removed during inference.

\begin{figure}
    \centering
    \includegraphics[trim={0 15.8cm 0 0},clip,width=0.98\linewidth]{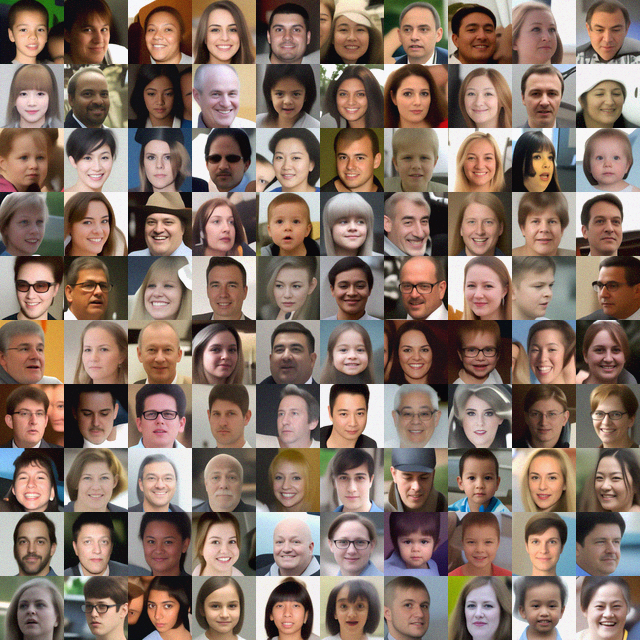}
    
    \includegraphics[trim={0 15.8cm 0 0},clip,width=0.98\linewidth]{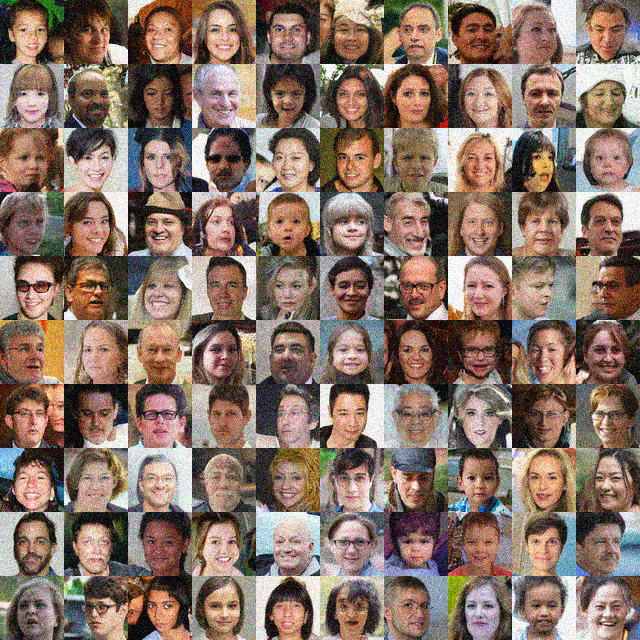}
    \caption{Standard score inference (bottom) and extended score inference (top) for FFHQ with additive Gaussian noise.}
    \label{fig:ffhq_additive_Gaussian}
\end{figure}

\begin{figure}
    \centering
    \includegraphics[trim={0 15.8cm 0 0},clip,width=0.98\linewidth]{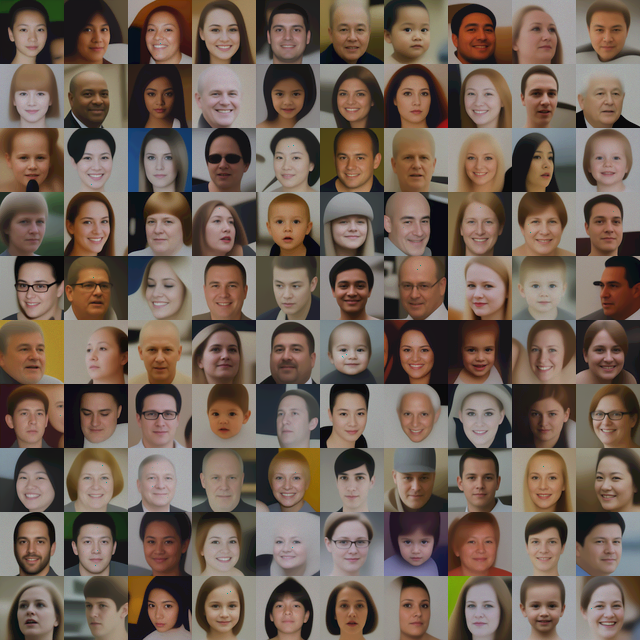}
    
    \includegraphics[trim={0 15.8cm 0 0},clip,width=0.98\linewidth]{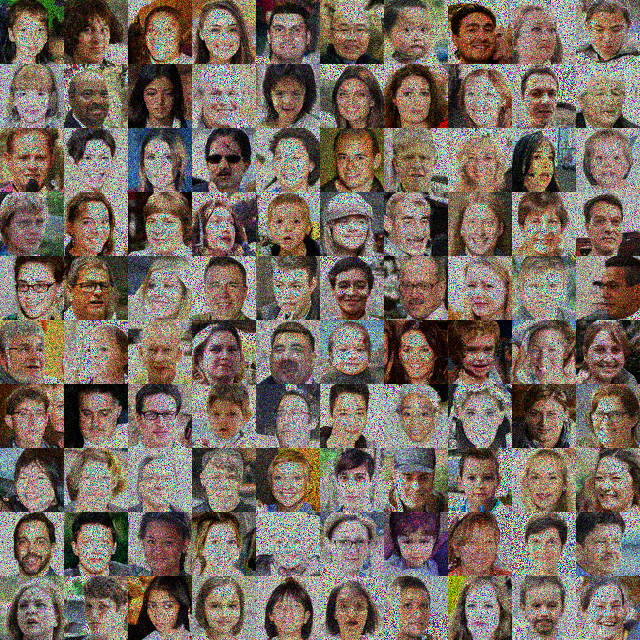}
    \caption{Standard score inference (bottom) and extended score inference (top) for FFHQ with Shot/Poisson noise.}
    \label{fig:ffhq_PN}
\end{figure}

\begin{figure}
    \centering
    \includegraphics[trim={0 15.8cm 0 0},clip,width=0.98\linewidth]{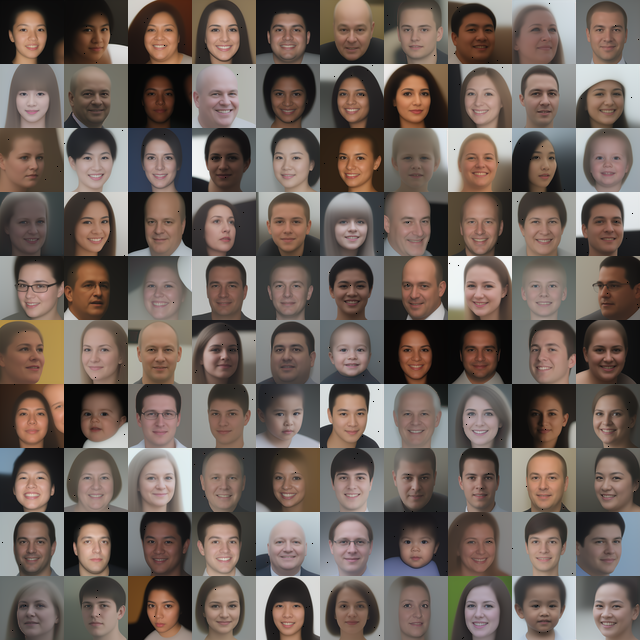}
    
    \includegraphics[trim={0 15.8cm 0 0},clip,width=0.98\linewidth]{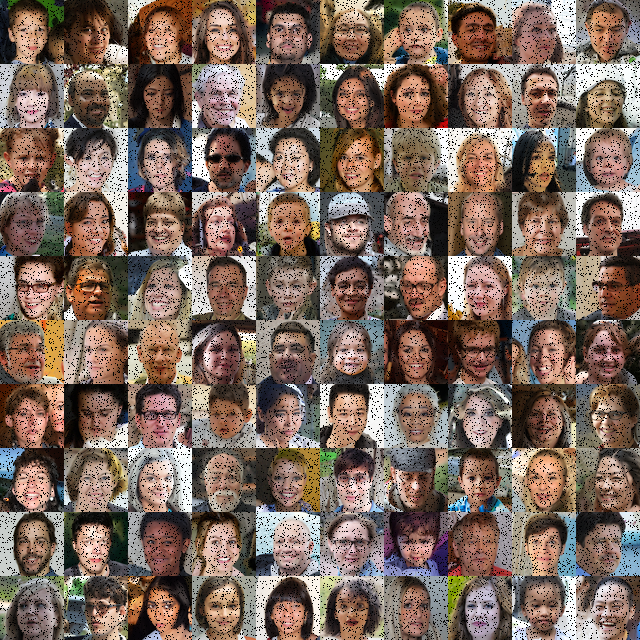}
    \caption{Standard score inference (bottom) and extended score inference (top) for FFHQ with pixel removal noise.}
    \label{fig:ffhq_PR}
\end{figure}

\subsection{Denoising Real Data}
\label{sub:real}
In this section we test our method on real data from single-particle Cryo-Electron Microscopy (see \cite{CryoEMprimer} for an introduction), where many particles of the same type are suspended in liquid, frozen, and put under an electron microscope. This produces extremely noisy $2$D-images which need to be refined before further steps like $3$D reconstruction can be attempted.
This presents an opportunity to investigate the performance of our method on practically relevant real world data with non-Gaussian noise. We use the EMPIAR-$11618$ \citep{EMPIARpaper} dataset of $68\,401$ grayscale images with  $256\!\times\!256$ resolution. They were extracted from  raw data and undergone some preprocessing, but are still very noisy (see Figure~\ref{fig:real_dataset}, top). We trained a DDPM++ model\footnote{In order to compensate for the higher resolution we reduced the number of feature channels in the DDPM++ architecture from $128$ to $32$, but otherwise used the same settings as for the synthetic data. Training took roughly $60$ hours on two $A100$ GPUs.} on this data and used our method to generate samples (see Figure~\ref{fig:real_dataset}, middle) whose shapes correspond strongly to what has been obtained by \cite{EMPIARpaper}, see Figure~\ref{fig:real_dataset}, bottom. We emphasize that the network has only ever seen noisy data and has in no way been specifically adjusted based on a priori knowledge of these shapes. This can be seen by the fact that standard score inference generates noisy samples, very similar to those in the training set (see Figure~\ref{fig:real_generated} in the Appendix). The parameters used for Figure~\ref{fig:real_dataset} of the extended score inference have been determined by hand with such knowledge, of course, but similar results are generated for a wide range of parameter choices (see Figure~\ref{fig:real_ab_grid} in the Appendix). In particular, this demonstrates a significant capability of the extended score to guide generation towards samples from the underlying image manifold, also in the case of real data with extreme noise corruption caused by physical measurement modalities. This example serves as proof of concept for the validation of our approach: reaching state-of-the-art performance is outside the scope of this work, and would require incorporating more prior domain knowledge.

% \begin{figure}
%     \centering
%     \begin{subfigure}{0.432\textwidth}
%         \includegraphics[width=\linewidth]{real_data36.png}     
%     \end{subfigure}
%     \begin{subfigure}{0.432\textwidth}
%         \includegraphics[width=\linewidth]{real_extended.png}     
%     \end{subfigure}
%     \includegraphics[width=0.12\linewidth]{colourbar.png}\\
%     \includegraphics[width=0.161\linewidth]{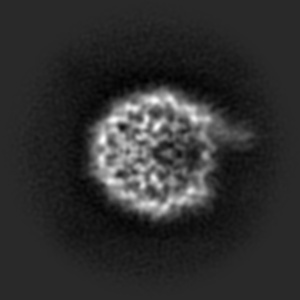}
%     \includegraphics[width=0.161\linewidth]{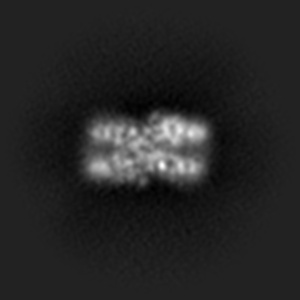}
%     \includegraphics[width=0.161\linewidth]{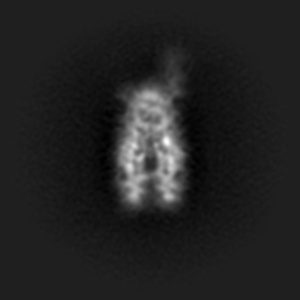}
%     \includegraphics[width=0.161\linewidth]{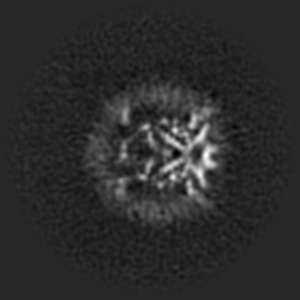}
%     \includegraphics[width=0.161\linewidth]{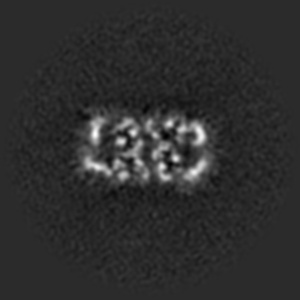}
%     \includegraphics[width=0.161\linewidth]{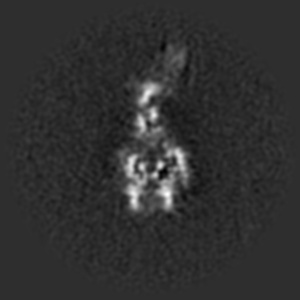}
%     \caption{Images from the EMPIAR-$11618$ dataset (left). Samples generated by Algorithm~\ref{alg:main} with parameters $\delta=0.001$, $p=8$, $a=0.01$, and $b=2$ (right). Shapes obtained by \cite{EMPIARpaper}(bottom). The colormap is applied with normalization per image to enhance contrast.}
%     \label{fig:real_dataset}
% \end{figure}

\begin{figure}
    \centering
    \begin{subfigure}{0.492\textwidth}
        \includegraphics[trim={0 6.77cm 0 0},clip,width=\linewidth]{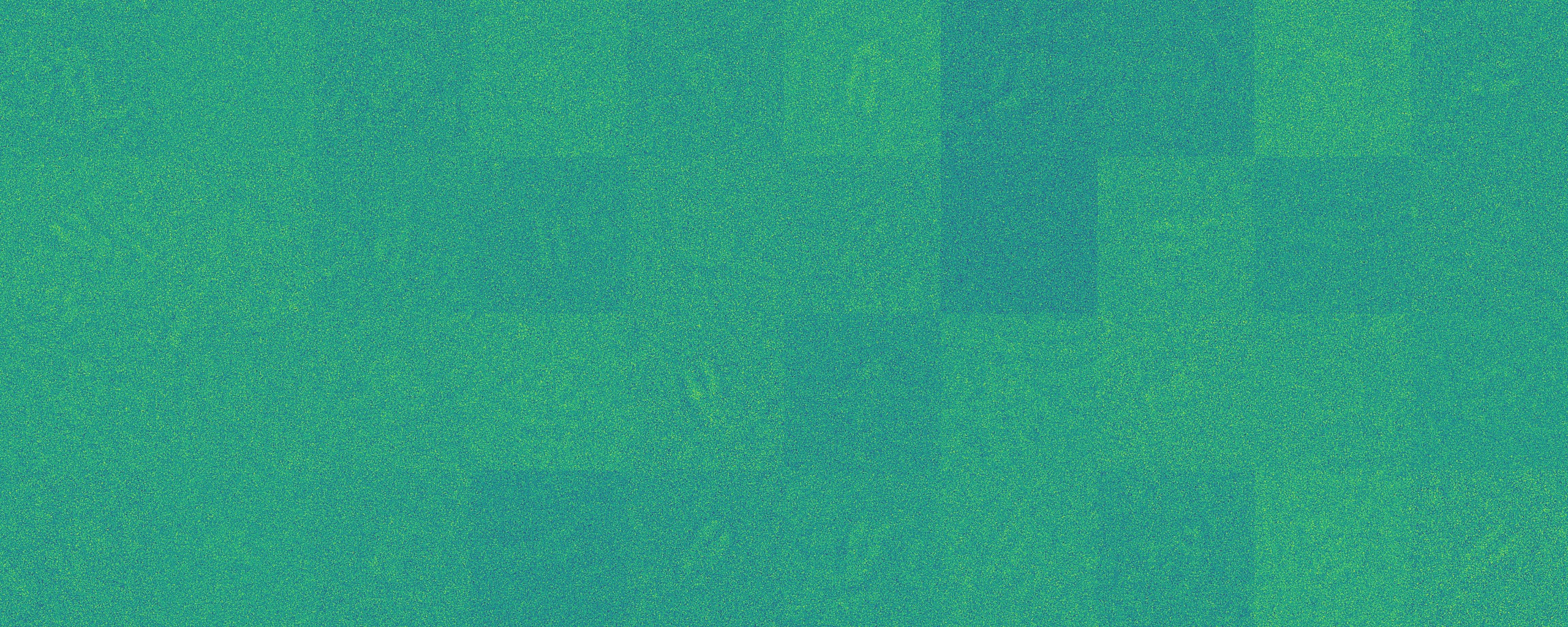}     
    \end{subfigure}
    \begin{subfigure}{0.492\textwidth}
        \includegraphics[trim={0 6.77cm 0 0},clip,width=\linewidth]{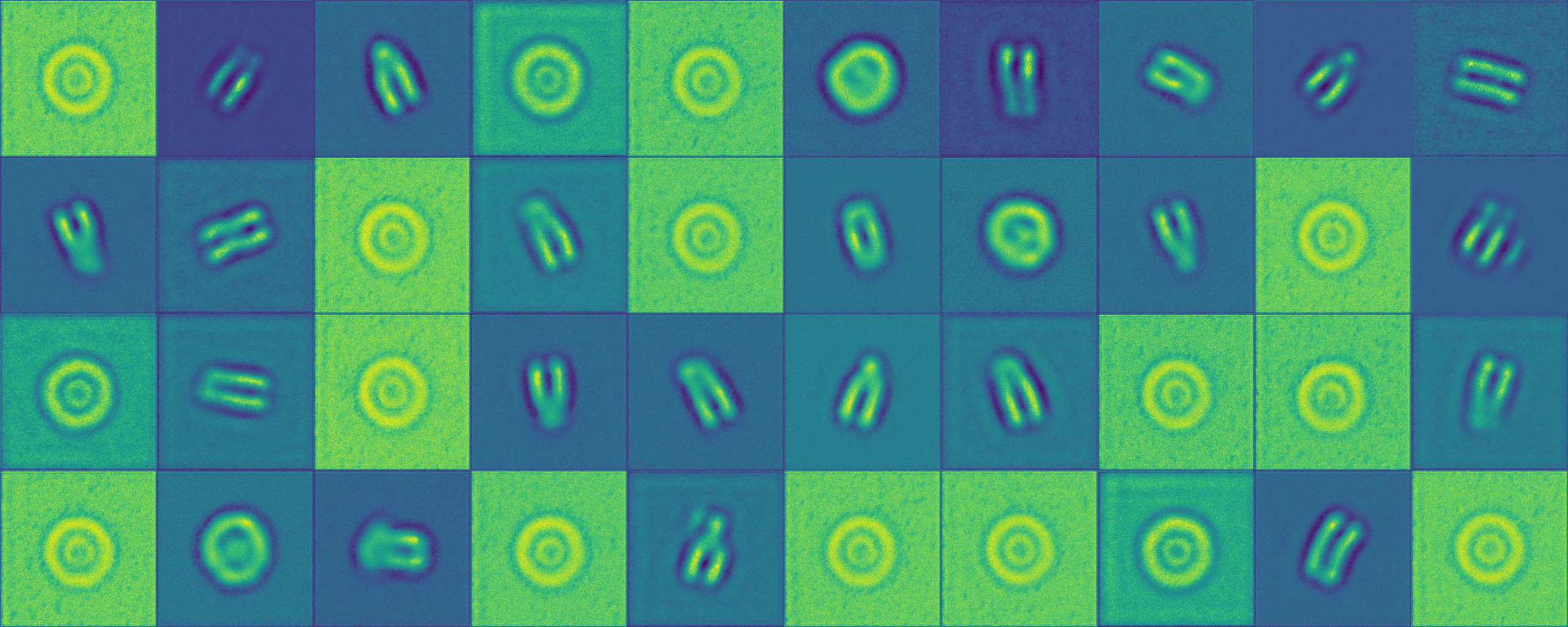}   
    \end{subfigure}\\
    \includegraphics[width=0.15\linewidth]{emd1.jpeg}
    \includegraphics[width=0.15\linewidth]{emd2.jpeg}
    \includegraphics[width=0.15\linewidth]{emd3.jpeg}
    \includegraphics[width=0.15\linewidth]{emd4.jpeg}
    \includegraphics[width=0.15\linewidth]{emd5.jpeg}
    \includegraphics[width=0.15\linewidth]{emd6.jpeg}
        \caption{Images from the EMPIAR-$11618$ dataset (top). Samples generated by Algorithm~\ref{alg:main} with  $\Delta=0.001$, $p=8$, $a=0.01$, $b=2$ (middle). Shapes obtained by \cite{EMPIARpaper} (bottom), taken from \citep[EMD-17944]{EMDB}. The colormap is applied with normalization per image to enhance contrast.}
    \label{fig:real_dataset}
\end{figure}

%\newpage
\section{Conclusion}

%In this work, we introduced Manifold Attracted Diffusion (MAD), a novel modification to the inference procedure of score-based diffusion models that enables approximate sampling from clean data distributions despite training on noisy datasets.
We introduced Manifold Attracted Diffusion (MAD), a novel inference approach for score-based diffusion models to generate clean samples from a distribution despite training on noisy datasets.
%By leveraging the manifold hypothesis and defining an extended score, our method suppresses off-manifold noise while preserving on-manifold variations, which results in attracting samples toward a low-dimensional structure.
Based on the manifold hypothesis, our method utilizes the underlying geometry to suppress off-manifold variations while preserving on-manifold variations, which results in attracting samples toward a low-dimensional structure.
%It is an inference-only modification that can be applied to many pre-trained diffusion models without requiring retraining. The computational overhead is minimal, adding only one extra evaluation of the score network per step to approximate a derivative.
% It is an inference-only modification that can be applied to many pre-trained diffusion models without requiring retraining. The computational overhead is minimal, adding only one extra evaluation of the score network per step to approximate a derivative.
It can leverage established training algorithms and pretrained models.
The required computation cost is at most twice that of standard inference, due to requiring a second evaluation of the score network. However, we observed empirically that it is enough to use the extended score only in the last half of the inference steps, yielding an additional computational cost of 50\%, but additional investigation on this aspect is required.
% Our experiments on both toy examples and high-dimensional image data demonstrate that MAD successfully reduces variance and suppresses noise.
Numerical experiments on both synthetic and real data demonstrate that MAD successfully suppresses noise.

Future directions include extending MAD to solve inverse problems, as in Cryo-EM for denoising individual images, or for other denoising or image restoration tasks.
This would benefit from integration with conditional diffusion models %, as explored in diffusion posterior sampling frameworks 
\citep{chungdiffusion, chung2022improving}. Developing automatic and adaptive parameter selection for $\gamma(t)$ would enhance the methods robustness and applicability. Further theoretical analysis of the extended score may lead to an improved incorporation into the inference procedure, in particular one could combine it with inference techniques using noise injection or higher order ODE discretization schemes. Future work should also involve a rigorous quantitative study comparing the performance and computational trade-offs of MAD against training-time methods, finetuning on limited clean data, and other relevant inference-time baselines such as truncated sampling \citep{daras2025how}. Furthermore, it would be valuable to investigate whether the MAD framework could be adapted to time-unconditioned generative models \citep{sun2025is}, or if our method fundamentally relies on a time-dependent score. It may also be interesting to consider whether the MAD framework can be adapted for Variance Preserving diffusion models. The key difficulty would be that one cannot use identity \eqref{eq:VE_trick} in that setting. Finally, a promising future direction, motivated by the conceptual similarities to classifier-free guidance \citep{ho2021classifierfree}, is to explore whether our extended score can serve as a more interpretable, geometrically-grounded method for improving generation quality, even for models trained on clean data.

\section*{Acknowledgments}
We would like to thank Pavol Harar for his advice on the CryoEM example.
This material is based upon work supported by the Air Force Office of Scientific Research under award numbers FA8655-23-1-7083. Co-funded by the European Union (ERC, SAMPDE, 101041040  and MSCA, CompAS, 101152591). Views and opinions
expressed are however those of the authors only and do not necessarily reflect those of the European
Union or the European Research Council. Neither the European Union nor the granting authority
can be held responsible for them. 
The research was supported in part by the MIUR Excellence Department Project awarded to Dipartimento di Matematica, Università di Genova, CUP D33C23001110001. Co-funded by European Union – Next Generation EU, Missione 4 Componente 1 CUP D53D23005770006. The numerical results have been achieved with the help of the Austrian Scientific Computing (ASC) infrastructure.

\section*{Conflict of Interest}
The authors declare no conflict of interest.

\section*{Impact Statement}
This paper presents work whose goal is to advance the field of Machine Learning. There are many potential societal consequences of our work, none which we feel must be specifically highlighted here.

\bibliography{iclr2026_conference}
\bibliographystyle{icml2026}

\newpage
\appendix
\onecolumn

\section{The Extended Score for Products of Measures}\label{sec:product}

In the following lemma, we derive the expression of the extended score of the product of two measures. In this section, the superscripts denote the dimension of the domain.

\begin{lemma}\label{lem:score-product}
    Let $d_1+d_2=d$. Take $p_1\in \widetilde{M}(\R^{d_1})$  and $p_2\in \widetilde{M}(\R^{d_2})$. Then the product measure $p=p_1\otimes p_2$ belongs to $\widetilde{M}(\R^{d})$ and
    \[
H^d_0 p(x)=(H^{d_1}_0 p_1(x_1),H_0^{d_2} p_2(x_2)),
    \]
    where $x=(x_1,x_2)\in \R^{d_1+d_2}$.
\end{lemma}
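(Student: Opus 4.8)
The plan is to reduce everything to the factorization of Gaussian convolutions across the product structure, then push the $\gamma$-derivative through. First I would observe that the Gaussian density factorizes: for $x=(x_1,x_2)\in\R^{d_1}\times\R^{d_2}$ we have $g_\gamma^d(x)=g_\gamma^{d_1}(x_1)\,g_\gamma^{d_2}(x_2)$, and since $p=p_1\otimes p_2$, the convolution factorizes as well, $(p*g_\gamma^d)(x)=(p_1*g_\gamma^{d_1})(x_1)\cdot(p_2*g_\gamma^{d_2})(x_2)$. Taking $\log$ turns the product into a sum, so $\grad{x}\log(p*g_\gamma^d)(x)=\bigl(\grad{x_1}\log(p_1*g_\gamma^{d_1})(x_1),\,\grad{x_2}\log(p_2*g_\gamma^{d_2})(x_2)\bigr)$, i.e. $S^d(p*g_\gamma^d)(x)=\bigl(S^{d_1}(p_1*g_\gamma^{d_1})(x_1),\,S^{d_2}(p_2*g_\gamma^{d_2})(x_2)\bigr)$. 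This is the key structural identity: the score operator acts blockwise on product measures.

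Next I would differentiate this identity in $\gamma$. Since the first block depends only on $\gamma$ through $p_1*g_\gamma^{d_1}$ and likewise for the second, $\ddgamma$ commutes with taking the two components, so $\ddgamma S^d(p*g_\gamma^d)(x)=\bigl(\ddgamma S^{d_1}(p_1*g_\gamma^{d_1})(x_1),\,\ddgamma S^{d_2}(p_2*g_\gamma^{d_2})(x_2)\bigr)$. Plugging both identities into the definition $H^d_\gamma p=(1+\gamma)S^d(p*g_\gamma^d)+\gamma\ddgamma S^d(p*g_\gamma^d)$, and using that the scalar prefactors $(1+\gamma)$ and $\gamma$ are the same in both blocks, yields
\begin{align*}
H^d_\gamma p(x)=\bigl(H^{d_1}_\gamma p_1(x_1),\,H^{d_2}_\gamma p_2(x_2)\bigr)\qquad\text{for all }\gamma>0.
\end{align*}
Now take $\gamma\to0$: by hypothesis $p_1\in\widetilde{M}(\R^{d_1})$ and $p_2\in\widetilde{M}(\R^{d_2})$, so each component $H^{d_i}_\gamma p_i(x_i)$ converges in $\R^{d_i}$; hence the pair converges in $\R^d$, which shows $p\in\widetilde{M}(\R^d)$ and gives $H^d_0 p(x)=\bigl(H^{d_1}_0 p_1(x_1),\,H^{d_2}_0 p_2(x_2)\bigr)$ by continuity of the projections.

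The main thing to be careful about — the only real obstacle — is justifying that the $\gamma$-derivative can be computed blockwise and that $\ddgamma$ commutes with the component-extraction and with $\log$; this needs $p_i*g_\gamma^{d_i}$ to be smooth and strictly positive and the relevant derivatives to exist, which is exactly the regularity packaged into membership in $\widetilde{M}$ and the definition of $S$ on $P(\R^d)$ (convolution with a Gaussian is $C^\infty$ and strictly positive for $\gamma>0$, so $S(p_i*g_\gamma^{d_i})$ is a well-defined $C^1$-in-$\gamma$ object wherever needed). I would state this smoothness explicitly as a short preliminary remark and then the chain of equalities above is essentially immediate. One minor point worth a sentence: the limit defining $H_0$ is required to be finite pointwise for \emph{every} $x$, and the product structure means the set of $x$ where the limit exists is precisely the product of the corresponding sets for $p_1$ and $p_2$, so $\widetilde{M}$-membership of the factors transfers cleanly to the product.
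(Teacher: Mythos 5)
Your proposal is correct and follows essentially the same route as the paper: factor $g_\gamma^d$ across the blocks, use Fubini to factor the convolution, take $\log$ and the gradient to get the blockwise score identity, exploit that $H_\gamma$ is built linearly from $S(p*g_\gamma)$ (you make the $\gamma$-differentiation explicit, the paper phrases it as linearity), and pass to the limit componentwise using the $\widetilde{M}$-membership of the factors.
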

\begin{proof}
  We have that $g_\gamma^d (x) = g_\gamma^{d_1}(x_1)g_\gamma^{d_2}(x_2)$. Thus, by Fubini's theorem, we have
  \[
(p*g^d_\gamma)(x) = (p_1*g^{d_1}_\gamma)(x_1)\cdot (p_2*g^{d_2}_\gamma)(x_2).
  \]
  Thus
    \[
\log (p*g^d_\gamma)(x) =\log  (p_1*g^{d_1}_\gamma)(x_1)+\log (p_2*g^{d_2}_\gamma)(x_2).
  \]
  Taking a gradient with respect to $x$ we obtain
  \[
  S^d(p*g^d_\gamma)(x) = (S^{d_1}(p_1*g^{d_1}_\gamma)(x_1),0_{d_2})+(0_{d_1},S^{d_2}(p_2*g^{d_2}_\gamma)(x_2)).
  \]
  Therefore, since the expression of $H^d_\gamma (p)$ is linear in $S^d(p*g^d_\gamma)$, we obtain
\[
H^d_\gamma (p)(x) =( H^{d_1}_\gamma (p_1)(x_1),0_{d_2})+(0_{d_1},H^{d_2}_\gamma (p_2)(x_2))=( H^{d_1}_\gamma (p_1)(x_1),H^{d_2}_\gamma (p_2)(x_2)).
\]
Taking the limit as $\gamma\to 0$, the result follows.
\end{proof}

This result can be used to calculate the extended score of a degenerate distribution $p$, namely, a distribution supported on a lower-dimensional affine subspace of $\R^d$. Since the score, and thus the extended score, is equivariant with respect to rotations and translations, without loss of generality, we can assume that $p$ is supported on $\{(x_1,0_{d_2})\in\R^d:x_1\in\R^{d_1}\}$ with density $p_1\in P(\R^{d_1})$, namely,
\[
p=p_1\otimes \delta_{d_2},
\]
where $\delta_{d_2}\in\widetilde{M}(\R^{d_2})$ is the Dirac delta centered at $0$ in $\R^{d_2}$. By Lemmata~\ref{lem:extension} and \ref{lem:score-product}, we obtain that the extended score of $p$ is given by
\[
H_0^d p(x) = (H^{d_1}_0 p_1(x_1),H_0^{d_2} p_2(x_2))
= (S^{d_1} p_1(x_1),-x_2),
\]
where we have also used that $H_0\delta(x)=-x$.

In the particular case when $p_1$ is a non-degenerate Gaussian distribution on $\R^{d_1}$ with mean $\mu_1\in\R^{d_1}$ and covariance $\Sigma_1\in\R^{d_1\times d_1}$, namely $p_1=\mathcal{N}(\mu_1,\Sigma_1)$, we have
\[
H_0^d p(x) = (-\Sigma_1^{-1}(x_1-\mu_1),-x_2).
\]
As expected, this coincides with the standard score of the (non-degenerate) Gaussian distribution $\mathcal{N}((\mu_1,0_{d_2}),\Sigma)$ on $\R^d$, where $\Sigma$ is the block matrix given by
\[
\Sigma= \begin{bmatrix}
    \Sigma_1 & 0\\
    0& \mathbb{I}^{d_2}
\end{bmatrix}.
\]

\section{Proofs and Derivations}\label{sec:proofs}
\begin{proof}[Proof of Lemma~\Ref{lem:extension}]
    By standard properties of approximate identities \citep[{Example~1.2.17 and Theorem~1.2.19(2)}]{grafakos-2014},
    for $f\in C(\R^d)$ bounded we have that
    \begin{align*}
        \lim_{\gamma\to 0} (f*g_\gamma)(x)
        = f(x),\qquad x\in\R^d,
    \end{align*} 
    Analogously, with $h_\gamma(x):=\tfrac{1}{\gamma}\|x\|^2 g_\gamma(x)$ by  \citep[Theorem~1.2.21(b)]{grafakos-2014}  we get that
    \begin{align*}
        \lim_{\gamma\to 0} (f*h_\gamma)(x) = C f(x),\qquad x\in\R^d,
    \end{align*}
    where $C=\int_{\R^d} (2\pi)^{-\frac{d}{2}} \|z\|^2e^{-\frac{\|z\|^2}{2}}  \d z < \infty$ . In addition, we observe that
    \begin{align*}
        \ddxi g_\gamma(x)=-\tfrac{x_i}{\gamma}g_\gamma(x)
    \end{align*}
    and
    \begin{align*}
        \ddgamma g_\gamma(x) = (\tfrac{-d}{2\gamma}+\tfrac{\|x\|^2}{2\gamma^2})g_\gamma(x) = \tfrac{1}{2\gamma}(h_\gamma(x)-dg_\gamma(x)). 
    \end{align*}
    Consequently, as $p(x)>0$ for all $x\in\R^d$, by assumption
    \begin{align*}
        \lim_{\gamma\to 0} \gamma\ddgamma S&(p*g_\gamma)
        = \lim_{\gamma\to 0} \gamma\ddgamma\frac{\nabla_x(p*g_\gamma)}{p*g_\gamma}=\lim_{\gamma\to 0} \gamma\ddgamma\frac{\nabla_x p*g_\gamma}{p*g_\gamma}\\
        &=\lim_{\gamma\to 0} \gamma \frac{(p*g_\gamma)(\nabla_xp*\ddgamma g_\gamma)-(\nabla_x p*g_\gamma)(p*\ddgamma g_\gamma)}{(p*g_\gamma)^2}\\
        &=\lim_{\gamma\to 0} \frac{(p*g_\gamma)\big((\nabla_x p* h_\gamma)-d(\nabla_x p* g_\gamma)\big)-(\nabla_x p*g_\gamma)\big(( p* h_\gamma)-d( p* g_\gamma)\big)}{2(p*g_\gamma)^2}\\
        &=\frac{p(C\nabla_x p -d\nabla_x p)- \nabla_x p (Cp-dp)}{2p^2}\\
        &=0.
    \end{align*}
    Moreover, it holds that
    \begin{align*}
        \lim_{\gamma\to 0}S(p*g_\gamma) = \lim_{\gamma\to 0} \frac{\grad{x}(p*g_\gamma)}{p*g_\gamma}=\frac{\lim_{\gamma\to 0}\grad{x}p*g_\gamma}{\lim_{\gamma\to 0}p*g_\gamma}=\frac{\grad{x}p}{p}=Sp,
    \end{align*}
    which completes the proof.
    % \begin{align*}
    %     \ddgamma g_\gamma(x) = (\tfrac{1}{2\gamma} + \tfrac{x^2}{2\gamma^2})g_\gamma(x).
    % \end{align*}
    % We observe the following basic fact about convolution. Let $h_1(x)=x$, $h_2(x)=x^2$, and $f,g\in L^{2}(\R^d)$, then
    % \begin{align*}
    %     (f*)
    % \end{align*}
    % We observe that
    % \begin{align*}
    %     \ddgamma S(p*g_\gamma)=\frac{(\grad{x}p * \ddgamma\g)(p*g_\gamma)-(\grad{x}p *\g)(p*\ddgamma g_\gamma)}{(p*g_\gamma)^2}
    % \end{align*}
\end{proof}
\begin{proof}[Proof of Lemma~\Ref{lem:Dirac_mix}]
     We write $h_i(x):=c_ie^{-\frac{\|x-\mu_i\|^2}{2\gamma}}$, i.e.\@ $p*g_\gamma = \sumi (2\pi\gamma)^{-\frac{d}{2}}  h_i$, 
     and observe that
     \begin{align*}
        S(p*g_\gamma)(x)=-\sum_{i\in[n]}\tfrac{x-\mu_i}{\gamma}w_i(x),
     \end{align*}
     where
     \begin{align*}
         w_i:=\tfrac{h_i}{\sumj{h_j}}.
     \end{align*}
     We observe that
     \begin{align*}
         \ddgamma w_i(x)=\frac{(\sumj h_j(x))\tfrac{\|x-\mu_i\|^2}{2\gamma^2}h_i(x)-h_i(x)\sumj\tfrac{\|x-\mu_j\|^2}{2\gamma^2}h_j(x)}{(\sumj h_j(x))^2}
     \end{align*}
     and
     \begin{align*}
         H_\gamma p(x)
         =(1+\gamma)S(p*g_\gamma)(x) + \gamma \ddgamma S(p*g_\gamma)(x)=\gamma S(p*g_\gamma)(x) +  \ddgamma \gamma S(p*g_\gamma)(x).
    \end{align*}
    
 We will first show that the second term vanishes for $\gamma\to 0$. To this end, we note
    \begin{align*}
         \ddgamma \gamma S(p*g_\gamma)(x)
         &=- \sumi(x-\mu_i)\frac{(\sumj h_j(x))\tfrac{\|x-\mu_i\|^2}{2\gamma^2}h_i(x)-h_i(x)\sumj\tfrac{\|x-\mu_j\|^2}{2\gamma^2}h_j(x)}{(\sumj h_j(x))^2}\\
         &=-\frac{\sum_{i,j\in[n]} h_i(x)h_j(x)(x-\mu_i)(\|x-\mu_i\|^2-\|x-\mu_j\|^2)}{2\gamma^2(\sumj h_j(x))^2}.
     \end{align*}
     Let $x\in W_k$ and $i,j\in[n]$ such that $x\notin W_i \vee x\notin W_j $, then
     \begin{align*}
         \lim_{\gamma\to 0} \frac{h_i(x)h_j(x)}{\gamma^2(\sumj h_j(x))^2}
         \leq \lim_{\gamma\to 0} \frac{h_i(x)h_j(x)}{\gamma^2h_k(x)^2}
         = \lim_{\gamma\to 0} \tfrac{c_ic_j}{\gamma^2 c_k^2}e^{\frac{2\|x-\mu_k\|^2-\|x-\mu_i\|^2 -\|x-\mu_j\|^2}{2\gamma}}=0
     \end{align*}
     as $2\|x-\mu_k\|^2-\|x-\mu_i\|^2 -\|x-\mu_j\|^2 <0$ by definition of $W_i$. Since $\|x-\mu_i\|^2-\|x-\mu_j\|^2 = 0$ if\footnote{In particular, if $i=j$.} $x\in W_i\cap W_j$, and $h_i(x)\geq0$ for every $x\in\R^d$, $i\in[n]$, we have
     \begin{align*}
         \lim_{\gamma\to 0}\ddgamma \gamma S(p*g_\gamma)(x)=0.
     \end{align*}
     
     We proceed by noting that
    \begin{align*}
        \lim_{\gamma\to 0}\frac{h_j(x)}{h_i(x)}
        =\lim_{\gamma\to 0}\tfrac{c_j}{c_i}e^{-\frac{\|x-\mu_j\|^2-\|x-\mu_i\|^2}{2\gamma}}
        =\begin{cases}
            0, & \|x-\mu_i\| < \|x-\mu_j\|\\
            \tfrac{c_j}{c_i} & \|x-\mu_i\| = \|x-\mu_j\|\\
            \infty, & \|x-\mu_i\| > \|x-\mu_j\|
        \end{cases}
    \end{align*}
    and, consequently, using the conventions that $\tfrac{1}{0}=\infty$ and $\tfrac{1}{\infty}=0$,
    \begin{align*}
    %\begin{split}\label{eq:zlim}
        \lim_{\gamma\to 0} w_i(x)
        &=\lim_{\gamma\to 0}\frac{h_i(x)}{\sum_{j\in[n]}h_j(x)}\\
        &=\lim_{\gamma\to 0}\left( 1+ \sum_{j\in[n],j\neq i}\frac{h_j(x)}{h_i(x)}\right)^{-1}\\
        &=\begin{cases}
            0, & x\notin W_i\\
            1, & (x\in W_i) \wedge (x\notin W_j \forall j\in[n]\backslash\{i\})\\
            (\sum_{j\in J}\tfrac{c_j}{c_i})^{-1}, & i\in J\subseteq[n]\colon x\in\bigcap_{j\in J}W_j(x)
        \end{cases}.
    %\end{split}
    \end{align*}
Thus 
\begin{align*}
    H_0p(x)=\lim_{\gamma\to 0}H_\gamma p(x)= \lim_{\gamma\to 0}\gamma S(p*g_\gamma)(x)=-\sumi (x-\mu_i)z_i(x).
\end{align*}
This concludes the proof.
\end{proof}

\begin{proof}[Derivation of equation (\ref{eq:correction_factor})]
    For $S_\theta(\sigma,x)=-\tfrac{x-\mu}{\sigma^2}$,
    we have
    \begin{align*}
        \ddsigma S_\theta(\sigma,x) = \tfrac{2(x-\mu)}{\sigma^3}
    \end{align*}
    and thus (\ref{eq:modified_inference2}) evaluates to
    \begin{align*}
        x_{i+1}
        &= x_{i} - m(t_i)(t_{i+1}-t_i)t_i \big((1+\gamma(t_i))(-\tfrac{x_i-\mu}{t_i^2}) +\tfrac{b\gamma(t_i)}{2t_i}\tfrac{2(x_i-\mu)}{t_i^3})\\
        &= x_{i} - m(t_i)\tfrac{t_i-t_{i+1}}{t_i}(x_i-\mu)\big(1+\gamma(t_i)- \tfrac{b\gamma(t_i)}{t_i^2}\big).
    \end{align*}
    We would like to force this to match the standard score inference step in (\ref{eq:standard_score_Dirac}), for this special case of $S_\theta(\sigma,x)$, which is achieved by choosing 
    \begin{align*}
    m(t_i)=(1+\gamma(t_i)-\tfrac{b\gamma(t_i)}{t_i^2})^{-1}.
    \end{align*}
\end{proof}

\section{Additional Numerical Examples}\label{sub:additionalnumerics}

Additional examples related to the datasets FFHQ and AFHQv2 are shown in Figure~\ref{fig:faces_appendix}. Additional examples related to ImageNet  are shown in Figure~\ref{fig:INET_appendix}.

\begin{figure}
    \centering
    \includegraphics[width=0.99\linewidth]{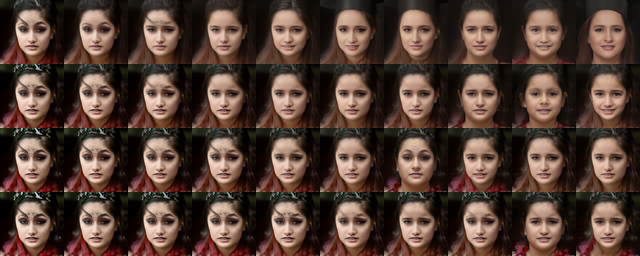}
    \caption{Generated with Algorithm~\ref{alg:main} from the same latent noise sample with $\Delta=0.0001$, $p=8$, and, from left to right, $b\in\{1,2,3,4,5,6,7,8,9,10\}$ as well as, from top to bottom, $a\in\{1,3,5,7\}$. }\label{fig:FFHQ_ab}
\end{figure}

\begin{figure}
    \centering
    \begin{subfigure}{0.42\textwidth}
    \includegraphics[width=\linewidth]{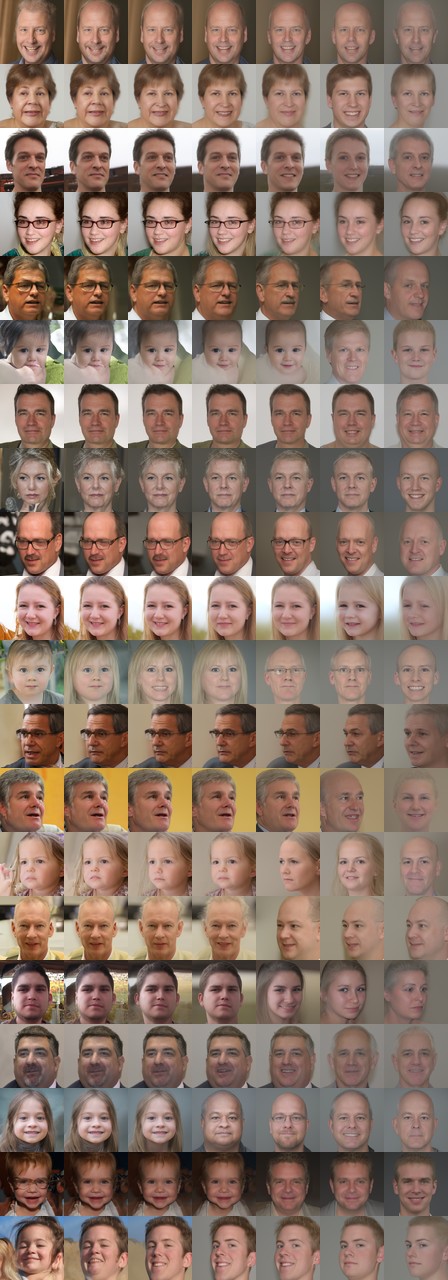}
    \subcaption{FFHQ}
    \end{subfigure}
    \begin{subfigure}{0.42\textwidth}
    \includegraphics[width=\linewidth]{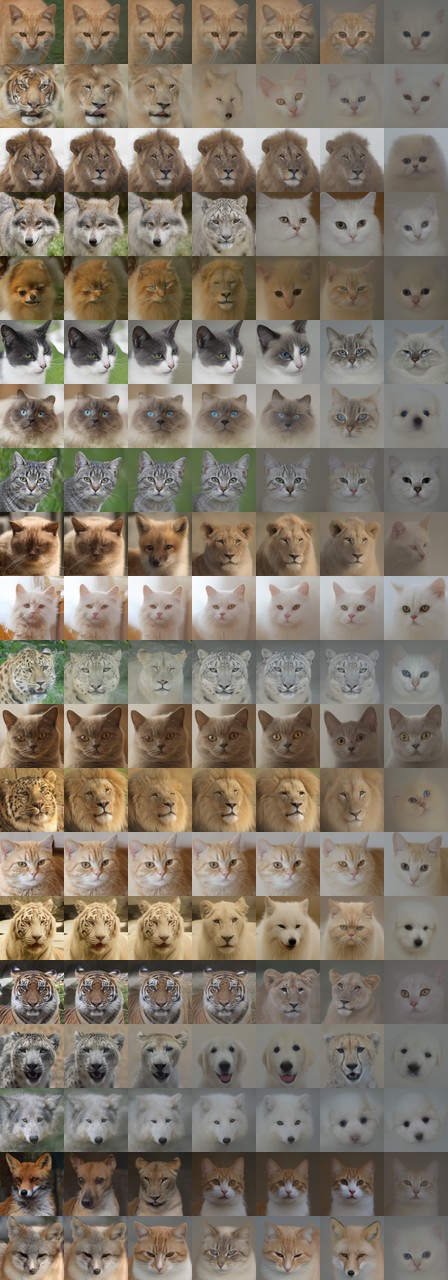} 
    \subcaption{AFHQv2}
    \end{subfigure}
    \caption{In both subfigures, all images in each row starts from the same latent noise sample and the leftmost column uses standard score whereas second to last columns use Algorithm~\ref{alg:main} with $\Delta=0.0001$, $p=8$, $a=2.5$, and, from left to right, $b\in\{2,5,10,20,40,80\}$. The rows are generated from consecutive random seeds.}\label{fig:faces_appendix}
\end{figure}

\begin{figure}
    \centering
    \includegraphics[width=0.86\linewidth]{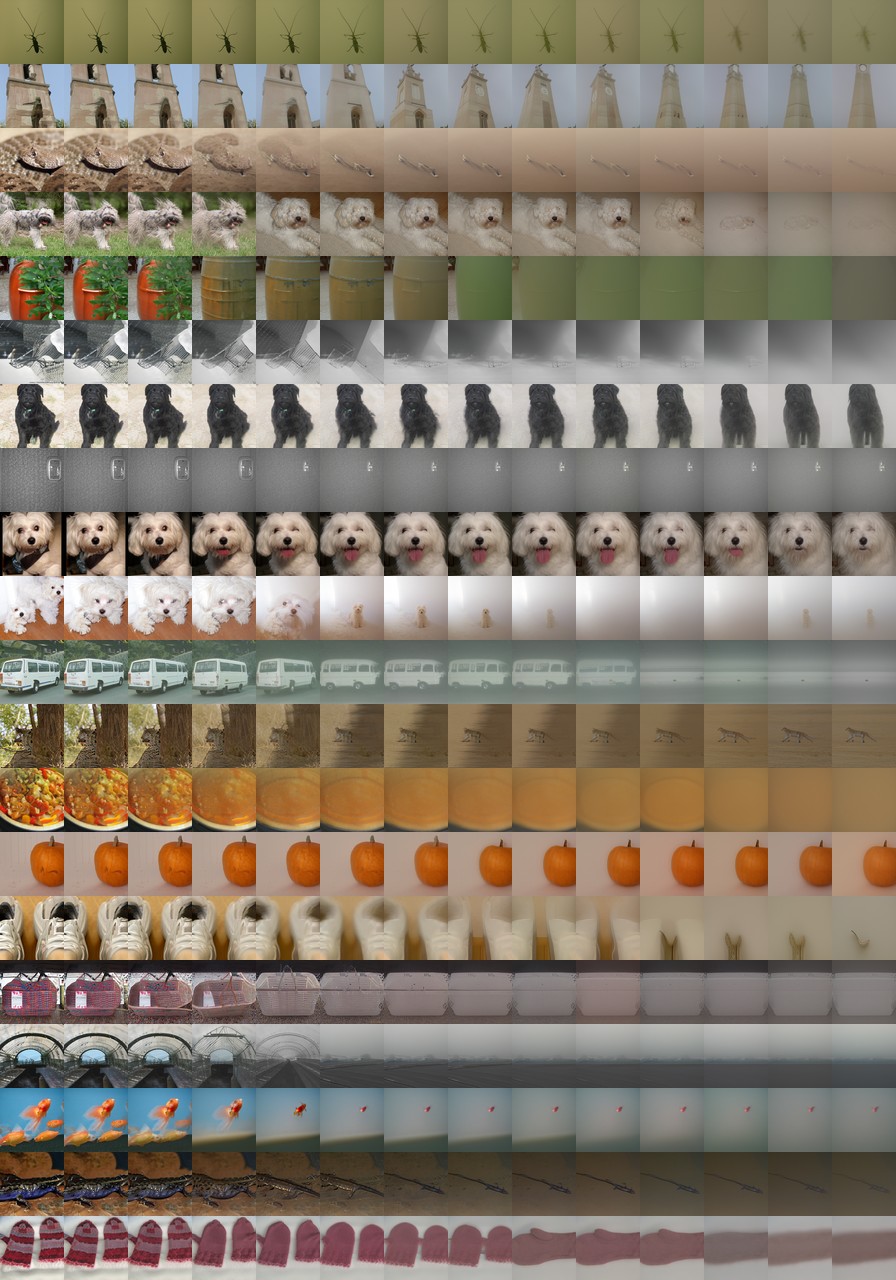}
    \caption{All images in each row starts from the same latent noise sample and the leftmost column uses standard score whereas second to last columns use Algorithm~\ref{alg:main} with $\Delta=0.001$, $p=12$, $a=4$, and, from left to right, $b\in\{1,2,5,10,15,20,25,30,35,40,50,60,70\}$. The rows are generated from consecutive random seeds.}\label{fig:INET_appendix}
\end{figure}

Figure~\ref{fig:synth_ab_grid} and Figure~\ref{fig:real_ab_grid} show that the samples generated by extended score inference have a strong dependence on the chosen hyperparameters, but a significant emergence of structure can be observed for many different choices. Despite being a simpler problem, the generation seems to be less stable w.r.t.\@ hyperparameter choices for the synthetic data. One reason for this might be that we have a $1$-dimensional manifold of images that is harder to find than the one underlying the EMPIAR-11618 data, which is, of course, not explicitely known but can be assumed to be higher-dimensional. It may also be due to a shorter training duration ($3$mimg compared to $10$mimg), which would be consistent with the generation being much more stable w.r.t.\@ the hyperparameters for FFHQ, AFHQv2, and ImageNet as the networks for those problems have been trained significantly longer.

\begin{figure}
    \centering
    \includegraphics[width=\linewidth]{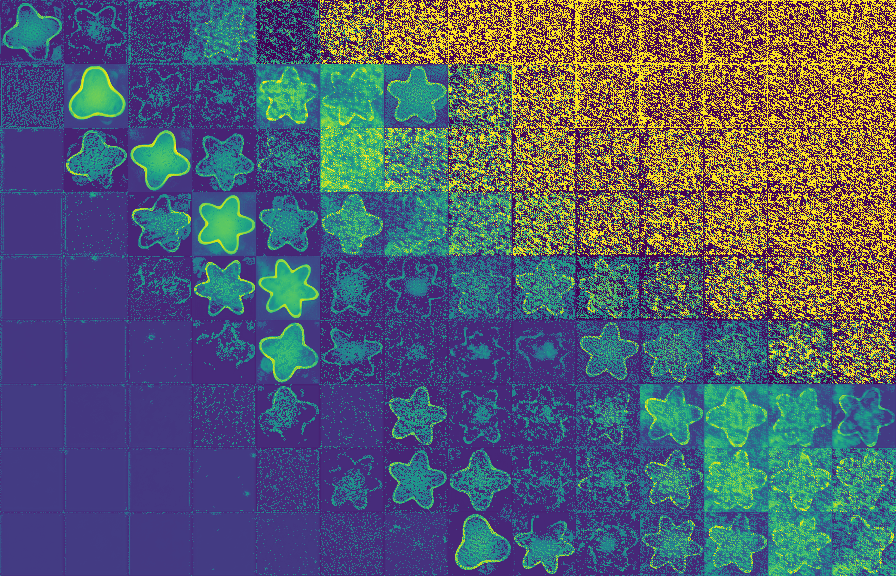}
    \caption{Generated with Algorithm~\ref{alg:main} from the same latent noise sample with $\Delta=0.02$, $p=8$, and, from left to right, $b\in\{10,15,20,25,30,35,40,45,50,55,60,65,70,75\}$ as well as, from top to bottom, $a\in\{0.0015,0.002,0.0025,0.003,0.0035,0.004,0.0045,0.005,0.0055\}$.}
    \label{fig:synth_ab_grid}
\end{figure}

\begin{figure}
    \centering
    \includegraphics[width=\linewidth]{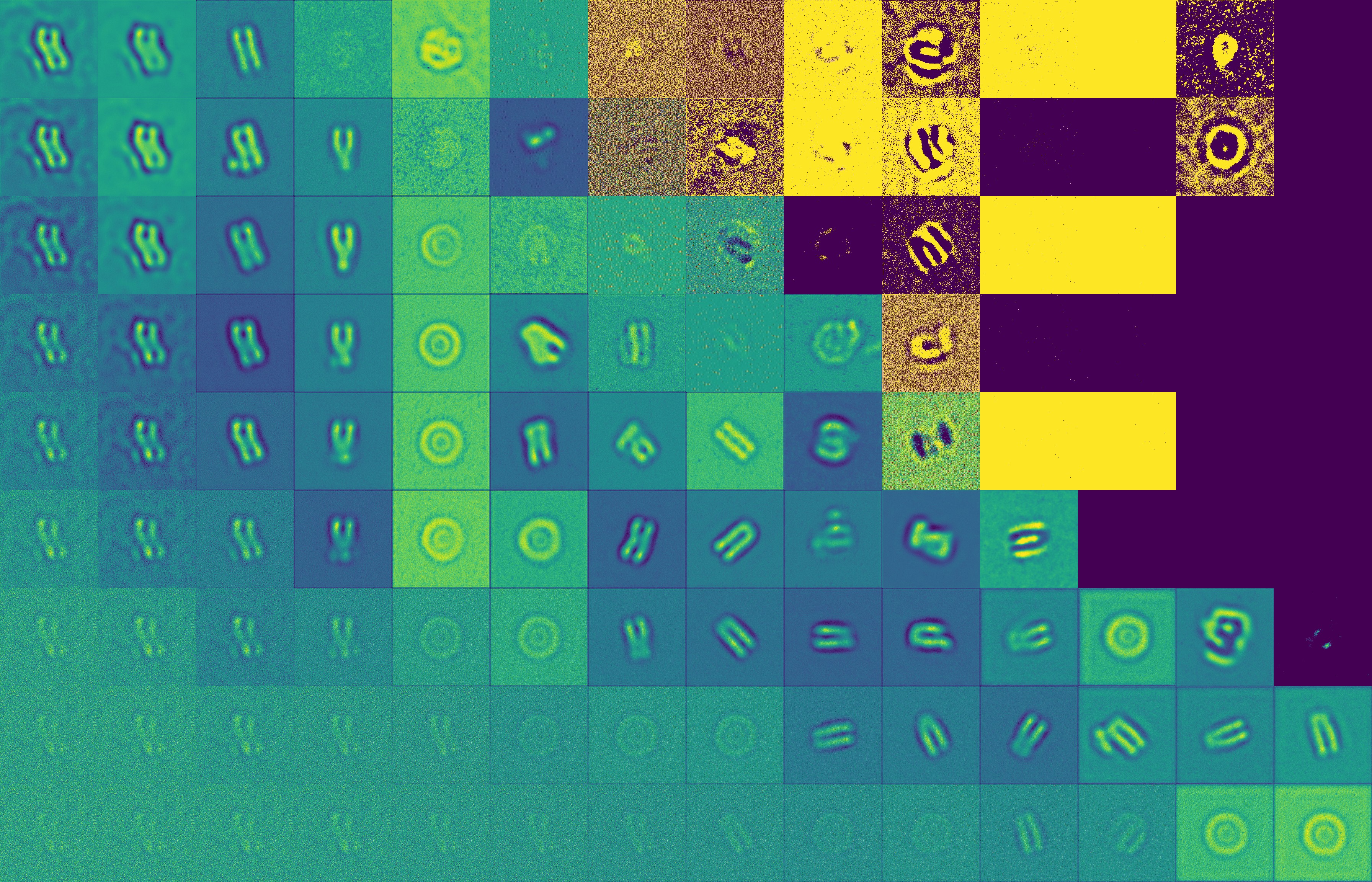}
    \caption{Generated with Algorithm~\ref{alg:main} from the same latent noise sample with $\Delta=0.001$, $p=8$, and, from left to right, $b\in\{0.25,0.5,1,2,4,8,16,32,64,128,256,512,1024,2048\}$ as well as, from top to bottom, $a\in\{0.0005,0.001,0.002,0.005,0.01,0.02,0.05,0.1,0.2\}$.}
    \label{fig:real_ab_grid}
\end{figure}

In Figure~\ref{fig:synth-pixel}, we show the results with the synthetic dataset with 50\% pixel removal. We see that images generated by standard score inference replicate the corruption present in the training set, while the images generated by extended score inference exhibit all key features of the clean data, while eliminating the corruption due to pixel removal.

\begin{figure}
    \centering
    \includegraphics[width=0.49\linewidth]{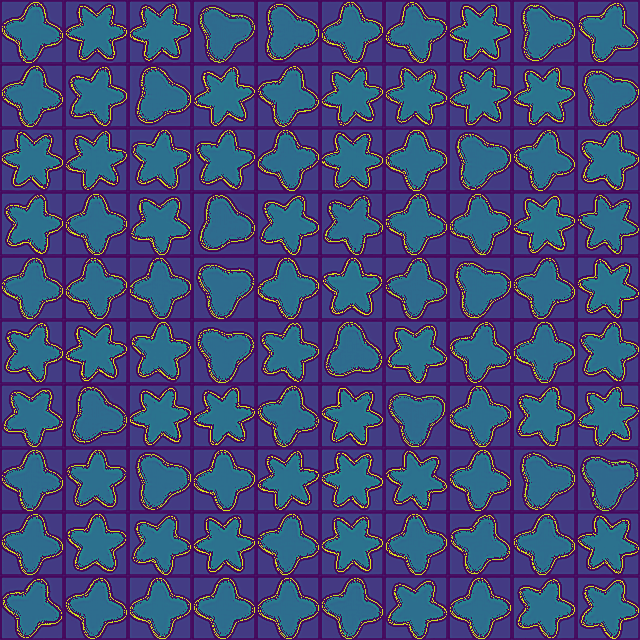}
    \includegraphics[width=0.49\linewidth]{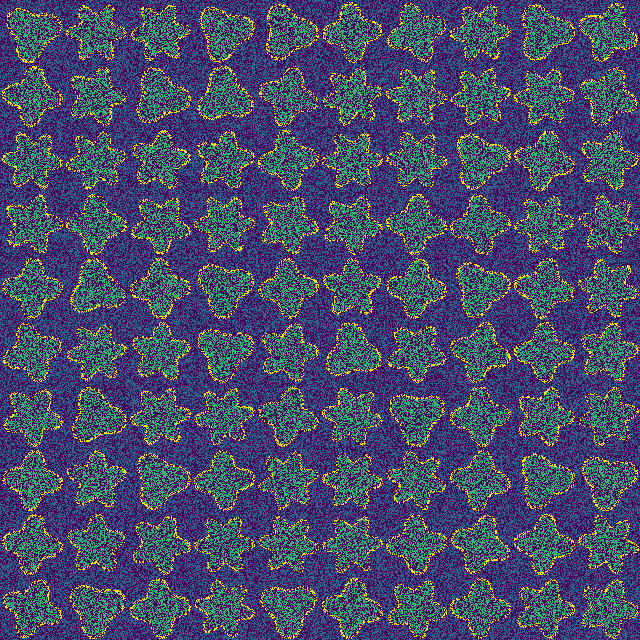}
    
    \caption{Comparison of generated images using a network trained on our synthetic dataset with pixel removal noise, specifically in a given image every pixel is set to $0$ with probability $0.5$. Images generated by standard score inference (right). Images generated by extended score inference (left).}
    \label{fig:synth-pixel}
\end{figure}

In Figure~\ref{fig:paths}, we compare the inference paths for standard and extended score generations: applying truncated sampling to the standard score generation would not be enough to obtain the denoising effect.

\begin{figure}
    \centering
    \includegraphics[trim={0 5.65cm 0 0},clip,width=\linewidth]{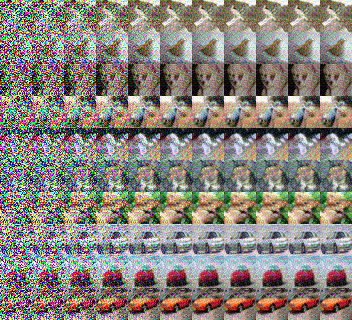}
    
    \vspace{0.2cm}
     
    \includegraphics[trim={0 5.65cm 0 0},clip,width=\linewidth]{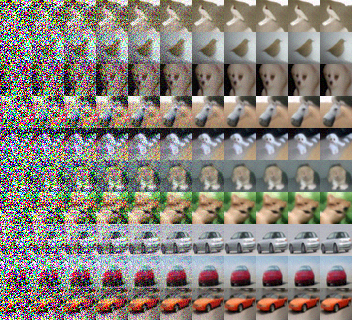}
    \caption{Illustrating the inference path for standard score generation (top) compared to extended score generation (bottom), displaying every second image generated during the inference procedure starting at step $20$.}
    \label{fig:paths}
\end{figure}

Additional images from the EMPIAR-11618 dataset, as well as images generated by standard score inference, are shown in Figure~\ref{fig:real_generated}.

\begin{figure}
    \centering
    \includegraphics[width=0.49\linewidth]{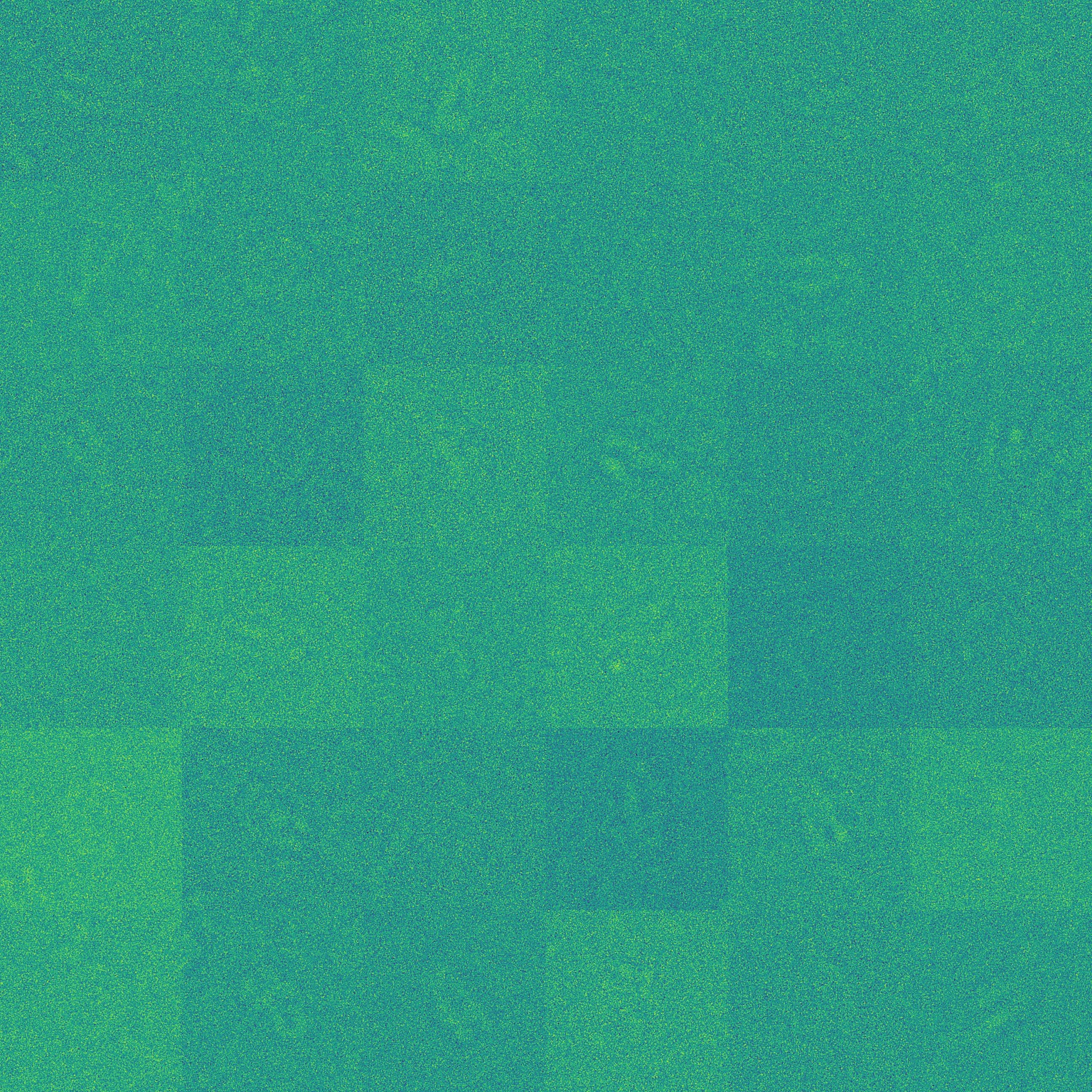}
    \includegraphics[width=0.49\linewidth]{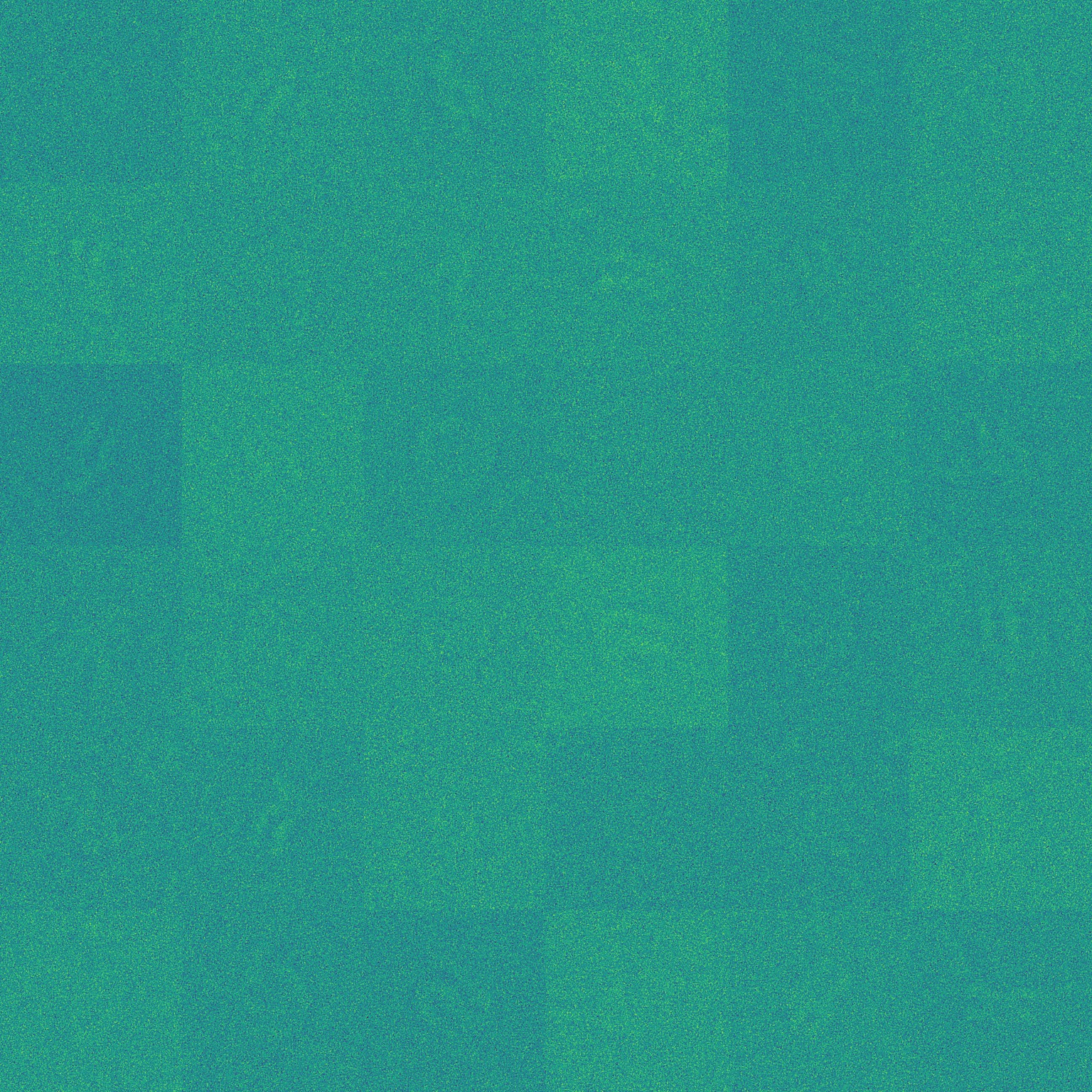}
    \caption{Images from the EMPIAR-11618 dataset (left). Images generated by standard score inference (right).}
    \label{fig:real_generated}
\end{figure}

\section{LLM use}

LLMs were used to polish the writing for parts of the text, to suggest related work, and as a coding aid. All those suggestions have only been implemented after thorough manual review. No LLMs were involved in any way in the mathematical derivations.

\end{document}